\documentclass[letterpaper]{article} 
\usepackage{aaai23}  
\usepackage{times}  
\usepackage{helvet}  
\usepackage{courier}  
\usepackage[hyphens]{url}  
\usepackage{graphicx} 
\urlstyle{rm} 
\usepackage{natbib}  
\usepackage{caption} 
\frenchspacing  
\setlength{\pdfpagewidth}{8.5in} 
\setlength{\pdfpageheight}{11in} 
%
\usepackage{algorithm}
\usepackage{algorithmic}

\usepackage{stfloats}
\usepackage{float}

\usepackage{amsmath}
\usepackage{amssymb}
\usepackage{mathtools}
\usepackage{amsthm}

\usepackage{color}
\usepackage{multirow}
\usepackage{graphicx}
\usepackage{subfigure}

%
\usepackage{newfloat}
\usepackage{listings}
\DeclareCaptionStyle{ruled}{labelfont=normalfont,labelsep=colon,strut=off} 
\lstset{%
	basicstyle={\footnotesize\ttfamily},
	numbers=left,numberstyle=\footnotesize,xleftmargin=2em,
	aboveskip=0pt,belowskip=0pt,%
	showstringspaces=false,tabsize=2,breaklines=true}
\floatstyle{ruled}
\newfloat{listing}{tb}{lst}{}
\floatname{listing}{Listing}
%
\pdfinfo{
/TemplateVersion (2023.1)
}

\theoremstyle{plain}
\newtheorem{theorem}{Theorem}
\newtheorem{proposition}{Proposition}
\newtheorem{lemma}{Lemma}

\theoremstyle{definition}
\newtheorem{definition}{Definition}

\theoremstyle{remark}

\usepackage{smile}

\setcounter{secnumdepth}{0} 

%


\title{Flow to Control: Offline Reinforcement Learning with Lossless Primitive Discovery}
\author{
    Yiqin Yang\textsuperscript{\rm 1}\equalcontrib, Hao Hu\textsuperscript{\rm 2}\equalcontrib, Wenzhe Li\textsuperscript{\rm 2}\equalcontrib, Siyuan Li\textsuperscript{\rm 3}\\
    Jun Yang\textsuperscript{\rm 1}, Qianchuan Zhao\textsuperscript{\rm 1}\thanks{Equal advising.}, Chongjie Zhang\textsuperscript{\rm 2}\footnotemark[2]\\
}
\affiliations{
    \textsuperscript{\rm 1}Department of Automation, Tsinghua University\\
    \textsuperscript{\rm 2}Institute for Interdisciplinary Information Sciences, Tsinghua University\\
    \textsuperscript{\rm 3}Harbin Institute of Technology\\

    
    \{yangyiqi19,huh22,lwz21\}@mails.tsinghua.edu.cn \\

%
}

\usepackage{bibentry}
\definecolor{amber}{rgb}{1.0, 0.75, 0.0}

\begin{document}

\maketitle

\begin{abstract}


Offline reinforcement learning (RL) enables the agent to effectively learn from logged data, which significantly extends the applicability of RL algorithms in real-world scenarios where exploration can be expensive or unsafe. 
Previous works have shown that extracting primitive skills from the recurring and temporally extended structures in the logged data yields better learning.
However, these methods suffer greatly when the primitives have limited representation ability to recover the original policy space, especially in offline settings.
In this paper, we give a quantitative characterization of the performance of offline hierarchical learning and highlight the importance of learning lossless primitives. To this end, we propose to use a \emph{flow}-based structure as the representation for low-level policies. This allows us to represent the behaviors in the dataset faithfully while keeping the expression ability to recover the whole policy space. We show that such lossless primitives can drastically improve the performance of hierarchical policies. The experimental results and extensive ablation studies on the standard D4RL benchmark show that our method has a good representation ability for policies and achieves superior performance in most tasks.


\end{abstract}

\section{Introduction}
Online reinforcement learning has succeeded dramatically in various domains, including strategy games~\cite{ye2020mastering}, recommendation systems~\cite{swaminathan2015batch}, and continuous control~\cite{lillicrap2015continuous}.
However, it requires extensive interaction with the environment to learn through trial and error.
In many real-world problems, like robot learning and autonomous driving, access to an interactive environment can be severely limited due to safety concerns or huge costs~\cite{shalev2016safe, singh2020cog}. However, in these applications, a large amount of pre-collected data is usually available, including human demonstrations and data from hand-engineered policies. This makes offline RL~\cite{levine2020offline, fu2020d4rl} an appealing approach for effectively learning from such previously logged data.

Leveraging the temporally extended and recurring structure has long been an effective paradigm to solve complex and diverse tasks~\cite{dietterich1998maxq,sutton1999between,kulkarni2016hierarchical} by facilitating exploration~\cite{pertsch2020accelerating}. 
For example, a kitchen robot may need to finish many sub-tasks to cook a dish: slide cabinet door, move kettle, and open microwave~\cite{gupta2019relay}. 
However, such structure in the agent's logged behavior is 
less appreciated
in the offline setting.
Especially, there is little theoretical justification for extracting low-level skills and learning a high-level policy with offline datasets, and it is unclear how offline-learned skills affect the overall performance.
This naturally leads to the following question:

\begin{center}
    
    
    {{\it Is there provable benefit from extracting temporally extended primitive skills in the offline setting, and how to extract useful skills to maximize such benefit?}}
\end{center}

In this paper, we give an affirmative answer to the first question by analyzing the performance of hierarchical offline learning in linear MDPs.
We show that there are provable benefits in learning a hierarchical structure when the low-level skills are well-learned, in the sense that such skills can faithfully represent the logged behavior and be expressive to recover the original policy space. 
As for the second question, we find that current skill-based offline RL methods are significantly compromised by the limited representation ability of the learned low-level skills to recover the original policy space.
Such loss of representation capacity can greatly harm some powerful offline algorithms' optimization, rendering skill-based methods less impractical in general usage.
To recover the policy space in a lossless way and learn more effective skills,
we propose to learn an \emph{invertible} function to map latent embeddings to temporally extended actions.
Since the mapping is invertible, the RL agent retains full control over the original policy space while faithfully representing the original dataset's behavior.
We name our method offline \underline{L}ossless \underline{P}rimitive \underline{D}iscovery~(LPD) and evaluate LPD on the standard D4RL benchmark.
Experimental results indicate that LPD achieves superior performance on challenging tasks.
Extensive ablation studies demonstrate the strong representation ability of the LPD.
To the best of our knowledge, our work is the first study analyzing the representation error in offline hierarchical RL.

\begin{figure*}[t]
    \centering \includegraphics[scale=0.55]{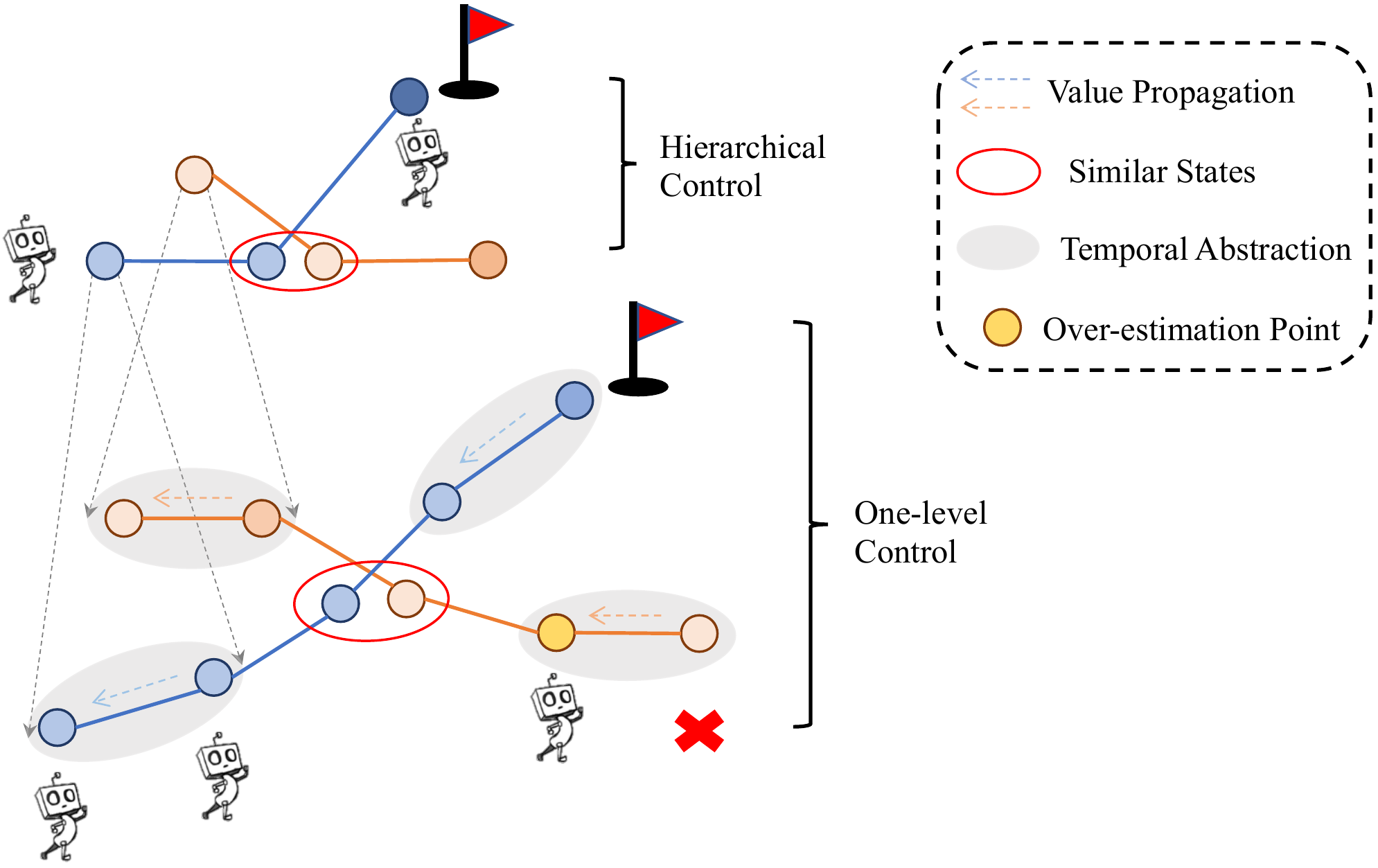}
    \caption{The general description of offline hierarchical learning. The agent makes a decision for every $c$ step and then follows the behavior-cloned primitive skills for $c$ steps. The temporally extended primitive skills help the agent to reduce the estimation error and make the right decision by reducing the planning horizon.
    }
    \label{fig:motivation}
\end{figure*}

\section{Related Work}
\paragraph{Offline RL.} 
Current offline RL methods mainly address the extrapolation error issue in value estimation, and they can be roughly divided into policy constraint~\cite{yang2021believe,peng2019advantage,nair2020accelerating,ma2021offline}, pessimistic value estimation~\cite{kumar2020conservative,ma2021conservative}, and model-based methods~\cite{yu2021combo,kidambi2020morel, hu2022role}.
Policy constraint methods aim to keep the policy close to the behavior under a probabilistic distance.
Pessimistic value estimation methods enforce a regularization constraint on the critic loss to penalize over-generalization.
Model-based methods attempt to learn a model from offline data with minimal modification to the policy learning~\cite{argenson2020model}.
Differently, we focus on the offline dataset's temporal structure
and explore how to improve these modern offline methods by extracting temporally extended primitive behaviors.

\paragraph{Hierarchical learning.}
Learning primitive skills is closely related to hierarchical models~\cite{dietterich1998maxq,sutton1999between,kulkarni2016hierarchical}: the low-level policy is the primitive policy extracted from the offline datasets, while the high-level policy is trained using modern offline RL.
This structure is similar to online skill discovery works~\cite{sharma2019dynamics,eysenbach2018diversity,shankar2020learning,singh2020parrot}, which use unsupervised objectives to discover skills and use the discovered skills for planning~\cite{sharma2019dynamics}, few-shot imitation learning~\cite{nam2022skill}, or online RL~\cite{nachum2018near}.
In contrast to these works designed for online settings, we focus on settings where a large dataset of diverse behaviors is provided, but access to the environment is restricted.
There are some variants of the above works in offline RL. 
For example, OPAL~\cite{ajay2020opal} explores hierarchical offline learning by adopting VAE-based primitive policies and achieves sound performance.
In the following sections, we briefly name the hierarchical offline RL method incorporated with OPAL as x+OPAL (e.g., IQL+OPAL).

\section{Preliminaries}
We consider infinite-horizon discounted Markov Decision Processes (MDPs), defined by the tuple $(\mathcal{S},\mathcal{A},\mathcal{P},r,\gamma),$ where $\mathcal{S}$ is a state space, $\mathcal{A}$ is an action space, $\gamma \in [0,1)$ is the discount factor and $\mathcal{P}: \mathcal{S}\times\mathcal{A}\rightarrow \Delta(\mathcal{S}), r:\mathcal{S}\times\mathcal{A}\rightarrow [0, r_{\text{max}}]$ are the transition function and reward function, respectively. We also assume a fixed distribution $\mu_0 \in \Delta(\mathcal{S})$ as the initial state distribution. 
The goal of an RL agent is to learn a policy $\pi: \mathcal{S}\rightarrow \Delta{(\mathcal{A})}$ under dataset $\mathcal{D}$, which maximizes the expectation of a discounted cumulative reward: $J(\pi)=\mathbb{E}_{\mu_0,\pi}\left[\sum_{t=0}^{\infty}\gamma^t r(s_t,a_t)\right]$. We assume the dataset is generated by an unknown behavior policy $\beta(a|s,z)$, with prior $z\sim Z$. Note that this assumption on data collection is implicit and we do not assume an additional structure of the dataset.
We define the dataset for learning primitives as $\mathcal{D}_{\text{low}}\doteq\{\tau_i=(s_t^i, a_t^i)_{t=0}^{c-1}\}_{i=1}^{N}$, where $\tau_i$ is the sub-trajectory and $c$ is the skill length.
We also create a dataset for high-level policy learning as $\mathcal{D}_{\text{hi}}=\{(s_0^i,z_i,\sum_{t=0}^{c-1}\gamma^tr_t^{i},s_c^i)\}_{i=1}^{N}$, where $z_i$ are learned labels for each sub-trajectory.
When learning from the low-level dataset, we consider a finite candidate function class $\Pi_\theta$. Further, we assume $\beta\in \Pi_\theta$, i.e., the true low-level policy is realizable in the function class $\Pi_\theta$.

To make things more concrete, we consider the \textit{linear MDP}~\citep{yang2019sample,jin2020provably} as follows, where the transition kernel and expected reward function are linear with respect to a feature map.

\begin{definition}[Linear MDP]
    We say an episodic MDP $(\cS,\cA,\cP,r, \gamma)$ is a linear MDP with known feature map $\Psi:\cS\times \cA\times \cS\to \RR^d, \Phi:\cS\times \cA\to \RR^d$ if there exist an unknown vector $\omega \in \RR^d$ such that
    \#
    \cP(s'\given s,a) = \Psi(s,a,s')^\top \omega ,~ \EE\bigl[r(s, a)\bigr] = \Phi(s,a)^\top\omega
    \#
    for all $(s,a,s')\in \cS\times \cA\times \cS$. And we assume $\|\Phi(s,a)\|_\infty\leq 1,\|\Psi(s,a,s')\|_\infty\leq 1$ for all $(s,a,s')\in \cS\times \cA \times\cS$ and $\|\omega\|_2\leq \sqrt{d}$.
    \label{assump:linear_mdp}
\end{definition}

We also consider the every-$c$-step hyper-MDP, where we act every $c$ step to determine the next primitive $z$ and keep the reward and the dynamics the same. Then we have the following proposition (See Appendix~\ref{proofs} for proof).

\begin{proposition}
    \label{prop:1}
    The every-$c$-step hyper-MDP induced by the linear MDP $\cM$ and the primitive policy $\beta(\cdot|s,z)$ is also a linear MDP.
\end{proposition}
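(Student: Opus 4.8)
The plan is to treat the hyper-MDP as an ordinary MDP whose ``action'' at each decision point is the primitive index $z$, whose single macro-step corresponds to rolling out $\beta(\cdot\mid s,z)$ for $c$ primitive steps, and whose macro-reward is the discounted $c$-step return. I would then exhibit explicit feature maps and a weight vector witnessing Definition~\ref{assump:linear_mdp}, handling the reward and the transition separately and stitching them together at the end.

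The reward part is the easy half. Writing the macro-reward as $\tilde r(s_0,z)=\EE_{\beta,z}\bigl[\sum_{t=0}^{c-1}\gamma^t r(s_t,a_t)\bigm| s_0\bigr]$ and pushing the expectation through the linear reward $\EE[r(s,a)]=\Phi(s,a)^\top\omega$, linearity of expectation lets me pull the fixed vector $\omega$ outside, so that $\tilde r(s_0,z)=\tilde\Phi(s_0,z)^\top\omega$ with $\tilde\Phi(s_0,z)\doteq\sum_{t=0}^{c-1}\gamma^t\,\EE_{\beta,z}[\Phi(s_t,a_t)\mid s_0]$. This is already linear in the original $\omega$ with a feature map of dimension $d$, and the geometric factor $\sum_t\gamma^t$ together with $\|\Phi\|_\infty\le 1$ keeps $\tilde\Phi$ bounded.

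The transition part is the crux. First I would average the one-step kernel over the primitive, defining $\bar\Psi_z(s,s')\doteq\EE_{a\sim\beta(\cdot\mid s,z)}[\Psi(s,a,s')]$, so that the policy-induced one-step kernel is again linear: $\cP_z(s'\mid s)=\bar\Psi_z(s,s')^\top\omega$. Composing $c$ of these into the macro-transition $\tilde\cP(s_c\mid s_0,z)=\int \prod_{t=0}^{c-1}\cP_z(s_{t+1}\mid s_t)\,ds_1\cdots ds_{c-1}$ produces a degree-$c$ expression in the entries of $\omega$, which is exactly the obstacle: it is not literally linear in $\omega$. I would resolve this by induction on the horizon, absorbing each intermediate state integral into the feature. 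The key algebraic identity for two steps is $\int \bigl(\bar\Psi_z(s_0,s_1)^\top\omega\bigr)\bigl(\bar\Psi_z(s_1,s_2)^\top\omega\bigr)\,ds_1=\bigl\langle \omega^{\otimes 2},\ \int \bar\Psi_z(s_0,s_1)\otimes\bar\Psi_z(s_1,s_2)\,ds_1\bigr\rangle$; iterating it $c$ times yields $\tilde\cP(s_c\mid s_0,z)=\tilde\Psi_{\mathrm{tr}}(s_0,z,s_c)^\top\omega^{\otimes c}$ for a tensorized feature $\tilde\Psi_{\mathrm{tr}}$ of dimension $d^c$. The effective weight for the macro-transition is thus the tensor power $\omega^{\otimes c}$ (or, if one prefers the $\phi$--$\mu$ factorization of the kernel, the composition instead folds into the feature and the weight stays at dimension $d$).

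Finally I would reconcile the two halves under the single shared weight the definition demands. I would stack $\hat\omega\doteq(\omega;\ \omega^{\otimes c})\in\RR^{d+d^c}$ and block-pad the feature maps, setting $\hat\Phi\doteq(\tilde\Phi;\ \mathbf 0)$ and $\hat\Psi\doteq(\mathbf 0;\ \tilde\Psi_{\mathrm{tr}})$ so that each reads off only its own block, giving a bona fide linear MDP of dimension $\tilde d=d+d^c$. The remaining work is the boundedness bookkeeping: $\|\hat\omega\|_2^2\le\|\omega\|_2^2+\|\omega\|_2^{2c}\le d+d^c=\tilde d$ follows from $\|\omega\|_2\le\sqrt d$, while the feature $\infty$-norm bounds follow because expectations and state integrals of quantities bounded by $1$ remain controlled, with the $c$-step and discount factors absorbed by a compensating rescaling of the features and $\hat\omega$. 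I expect the genuinely delicate point to be precisely the composition step, namely keeping the macro-transition linear in a single weight vector across all $c$ steps, whereas the reward identity and the final normalization are routine.
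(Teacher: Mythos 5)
Your reward half is exactly the paper's argument: push the expectation through the linear reward so that the macro-reward is $\tilde\Phi(s_0,z)^\top\omega$ with $\tilde\Phi(s_0,z)=\sum_{t=0}^{c-1}\gamma^t\,\mathbb{E}_{\beta,z}[\Phi(s_t,a_t)\mid s_0]$. But your treatment of the transition --- tensorizing the $c$-step composition into the weight $\omega^{\otimes c}$ against a feature of dimension $d^c$ --- is where a genuine gap appears. The entries of your tensor feature are integrals of the form $\int\!\cdots\!\int \bar\Psi_{z,i_1}(s_0,s_1)\cdots\bar\Psi_{z,i_c}(s_{c-1},s_c)\,ds_1\cdots ds_{c-1}$, i.e., integrals over intermediate states of products of \emph{individual feature coordinates}. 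These are not expectations under any probability measure, and nothing in Definition~1 makes them finite: take $d=2$, $\Psi_{(1)}=p(s'\mid s,a)+\tfrac12$, $\Psi_{(2)}\equiv\tfrac12$, $\omega=(1,-1)$, where $p$ is a transition density bounded by $\tfrac12$; this is a legitimate linear MDP satisfying all the norm constraints, yet $\int\bar\Psi_{z,2}(s_0,s_1)\,\bar\Psi_{z,2}(s_1,s_2)\,ds_1=\infty$ on an unbounded state space. So your assertion that ``state integrals of quantities bounded by $1$ remain controlled'' is false, and the ``compensating rescaling'' you defer to bookkeeping cannot repair an infinite (or, in the finite-state case, $|\mathcal{S}|^{c-1}$-sized) feature without destroying the requirement $\|\hat\omega\|_2\le\sqrt{\tilde d}$.

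The fix --- and the paper's actual proof --- is to expose only \emph{one} factor of $\omega$ and absorb the other $c-1$ factors into the feature in the form of a probability density. Writing $P_\beta^{k}$ for the $k$-step transition of the primitive-induced chain, one has $P_\beta^{c}(s_c\mid s_0,z)=\bigl[\int P_\beta^{c-1}(s_{c-1}\mid s_0,z)\,\bar\Psi_z(s_{c-1},s_c)\,ds_{c-1}\bigr]^\top\omega$. The bracketed feature is an expectation, under the probability measure $P_\beta^{c-1}(\cdot\mid s_0,z)$, of a quantity with $\|\cdot\|_\infty\le 1$, hence itself bounded by $1$; induction over $k=1,\dots,c$ closes the argument. (The feature then depends on $\omega$ through $P_\beta^{c-1}$, but the proposition only needs \emph{existence} of a feature map and weight vector, and the paper reads it this way too.) Crucially, this keeps the hyper-MDP at dimension $d$ with the \emph{same} weight vector $\omega$ serving both reward and transition, so no stacking is needed. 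This matters beyond aesthetics: Lemma~2 applies PEVI to the hyper-MDP at rate $\sqrt{c^\dagger d^3\zeta/N}$ in the \emph{original} dimension $d$, and Theorem~1's conclusion that a larger skill length $c$ provably reduces the offline error hinges on the dimension not growing with $c$. Your construction, even where it is well defined, yields dimension $d+d^c$, which would turn the offline error into one scaling with $(d+d^c)^{3/2}$ and wipe out the claimed benefit of temporal abstraction.
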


For the theoretical analysis, we use the \textit{pessimistic value iteration}~(PEVI) for high-level policy learning in the hyper-MDP. Please see Algorithm~\ref{alg:1} in Appendix~\ref{offline_algorithm} for more details.
The learned primitive skills $\pi_{\theta}(a|s,z)$ and the policy learned from PEVI together comprise a hierarchical policy $\widehat{\pi}_{\theta}$.
Similarly, We use $\widehat{\pi}_{\beta}$ to denote the composition of the high-level policy $\hat{\pi}$ from PEVI and the ground-truth low-level policy $\beta$.
$\pi^*_{\beta}$ is the optimal policy in the hyper-MDP with $\beta$ as primitive skills and $\pi^*$ is the optimal policy in $\cM$. 
We consider the suboptimality as the evaluation metric, which is defined as the performance difference between the optimal policy $\pi^*$ and the given policy $\pi$. Formally, we have $\text{\rm SubOpt}(\pi) = J(\pi^*)-J(\pi).$
    
\begin{figure*}[t]
    \centering
    \subfigure[IQL+OPAL~($c=1$)]{\label{OPAL_visual}\includegraphics[scale=0.35]{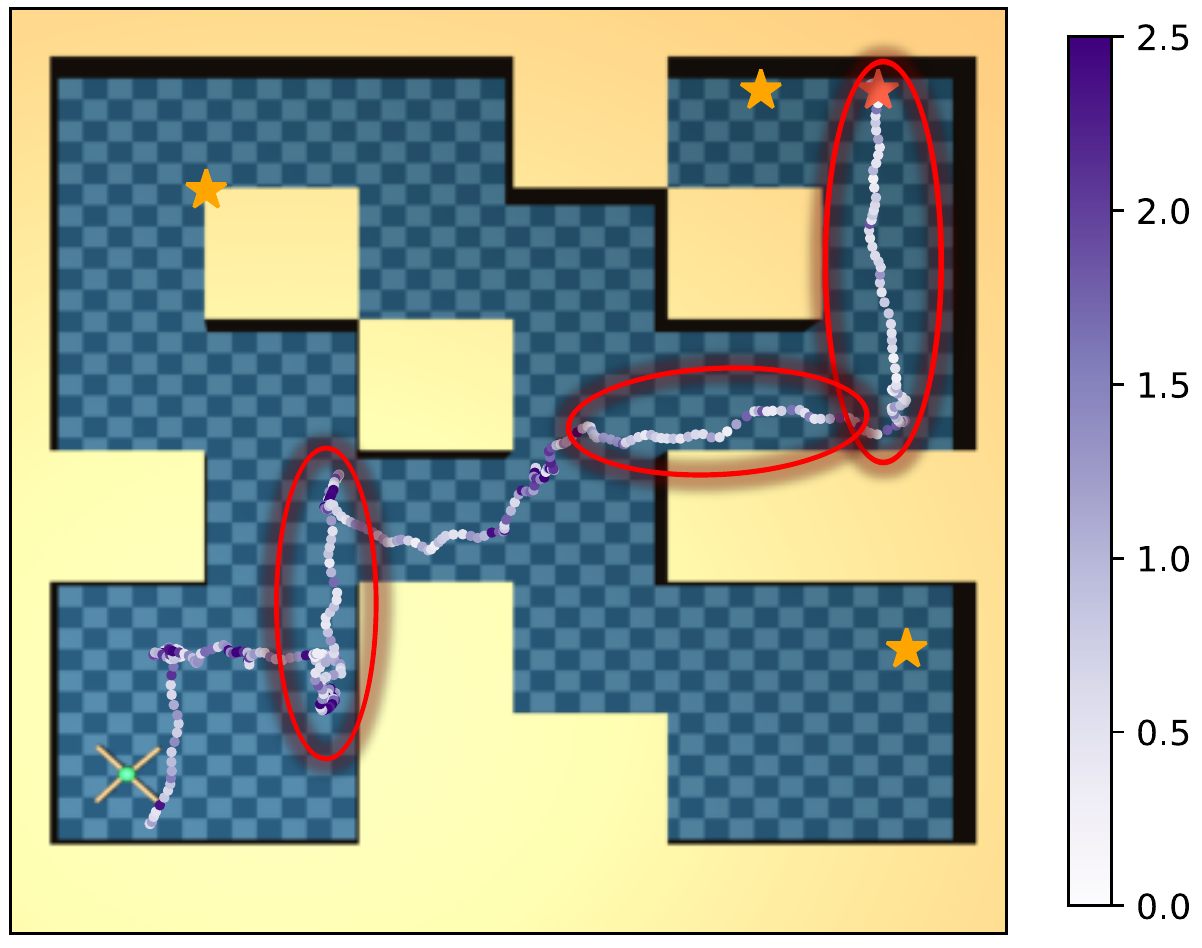}}
    \hspace{0.3cm}
    \subfigure[IQL+LPD~($c=1$)]{\label{HORL_visual}\includegraphics[scale=0.35]{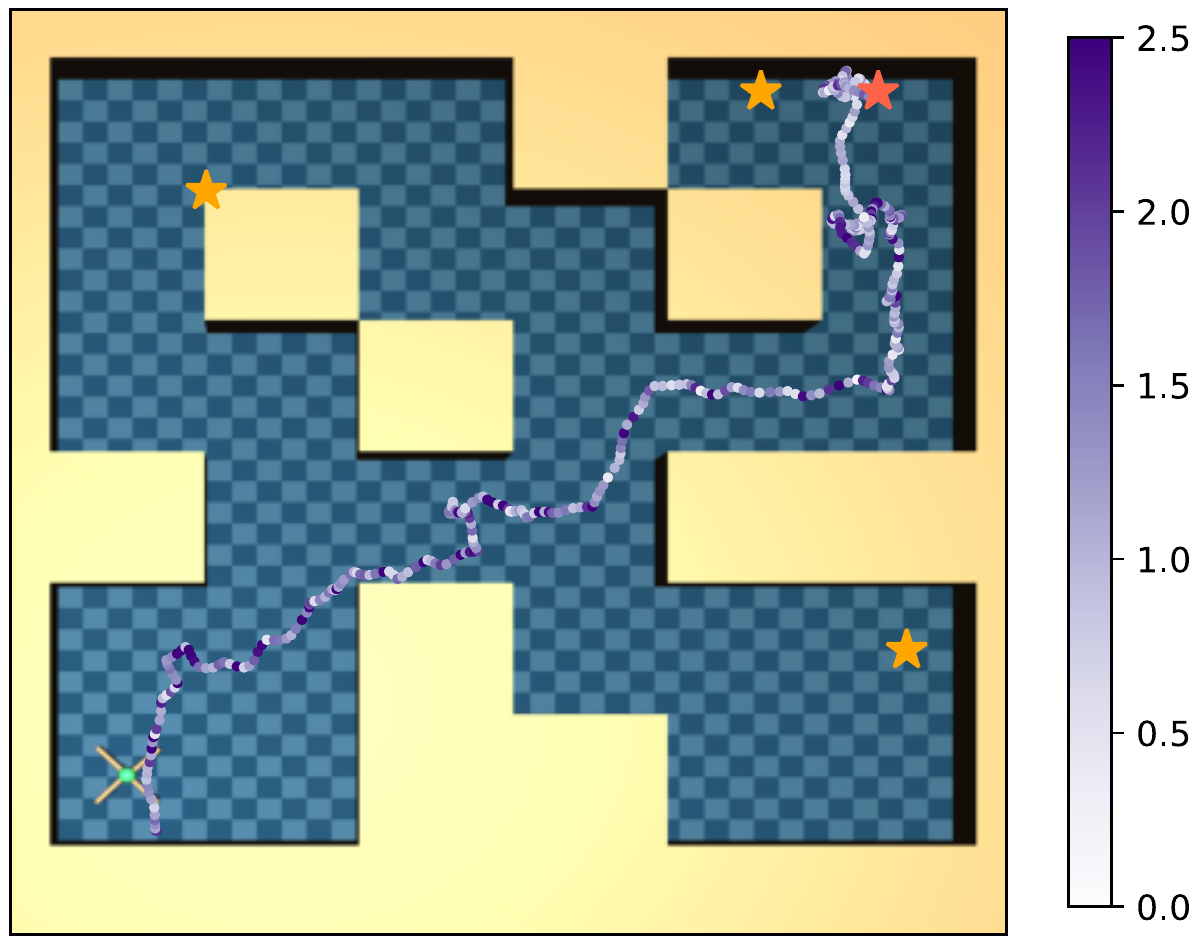}}
    \hspace{0.3cm}
    \subfigure[Performance]{\label{perf_c_1}\includegraphics[scale=0.31]{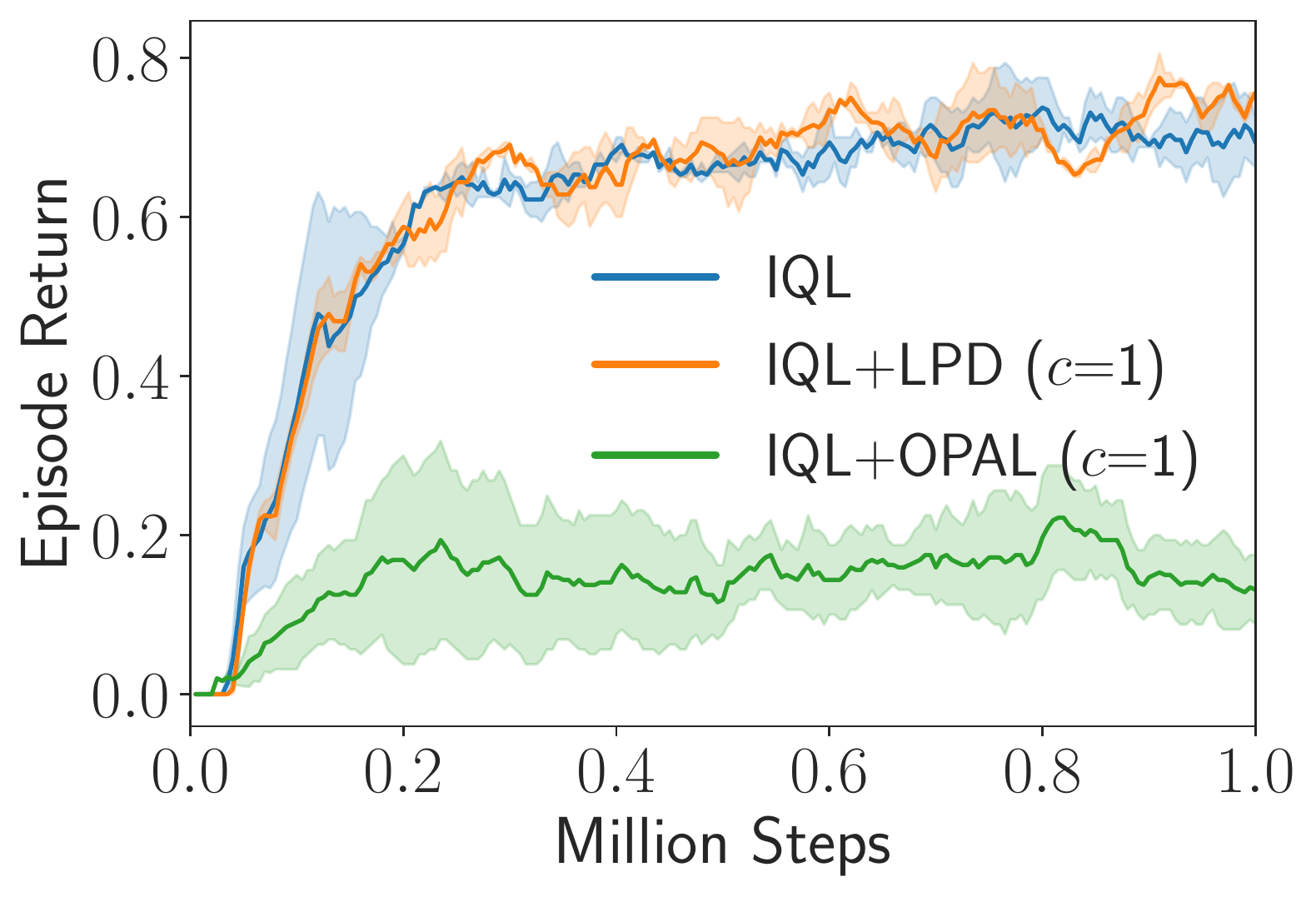}}
    \caption{Impact of limited representation on performance.
    (a) and (b) visualization of the similarity $\epsilon$ between the agent's decision $a$ and datasets $\mathcal{D}$ in the antmaze medium task, which is calculated by $\epsilon=\min_{\hat{a}\in\mathcal{C}}\|a-\hat{a}\|_{1}$, where 
    $\mathcal{C}=\{(\hat{s},\hat{a})\mid\|s-\hat{s}\|_{1}\leq 20, (\hat{s},\hat{a})\in\mathcal{D}$\} is the similar state set to the agent.
    Darker colors correspond to the lower similarity.
    We find the decoded actions of OPAL are limited to the datasets (red circle), which is opposed to the LPD.
    (c) Performance on the antmaze-medium-diverse-v0 task with skill length $c=1$.
    }
    \label{fig:generalization_figure}
\end{figure*}

\section{Theoretical Analysis}

In this section, we analyze the benefit of learning a hierarchical structure in the context of offline RL and linear MDPs.
To derive a performance bound and examine how a primitive learning method affects the overall performance, we consider the following decomposition:
    \begin{align*}
        \text{\rm SubOpt}(\widehat{\pi}_{\theta}) =   \underbrace{J(\widehat{\pi}_{\beta})-J(\widehat{\pi}_{\theta})}_{\text{Primitive Error}}+\\
        \underbrace{J(\pi^*_{\beta})-J(\widehat{\pi}_{\beta})}_{\text{Offline Error}} +\underbrace{J(\pi^*)-J(\pi^*_{\beta})}_{\text{Representation Error}}.
    \end{align*}
The primitive error comes from the generalization error of a learned low-level policy $\pi_\theta$ and the ground-truth low-level policy $\beta$. 
The offline error comes from learning in a high-level offline dataset.
The representation error comes from the limited representation ability of the hierarchical structure. 
In the following, we upper bound the three different kinds of error, respectively.

The primitive error comes from limited samples of the low-level dataset, which makes the learned low-level policy different from the original primitive. This can be seen as a standard machine learning problem and can be characterized as follows.
\begin{lemma}
    \label{lemma:1}
     With high probability $1-\delta$,
    \begin{equation}
        J(\widehat{\pi}_{\beta})-J(\widehat{\pi}_{\theta}) \leq \frac{\gamma c(c+1) r_{\text{max}} }{(1-\gamma)(1-\gamma^c)} \varepsilon_{\theta},
    \end{equation}
    where $\varepsilon_{\theta} = \sqrt{\frac{\ln{|\Pi_\theta|/{\delta}}}{N}}$.
    $|\Pi_\theta|$ is the size of the policy class and $N$ is the number of training samples.
\end{lemma}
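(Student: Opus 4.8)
The plan is to exploit that $\widehat{\pi}_{\beta}$ and $\widehat{\pi}_{\theta}$ share the \emph{same} high-level policy $\widehat{\pi}$ from PEVI and differ only in the low-level primitive ($\beta$ versus $\pi_\theta$). Viewing both in the every-$c$-step hyper-MDP of Proposition~\ref{prop:1}, they correspond to one fixed hyper-policy acting in two hyper-MDPs whose hyper-reward and hyper-transition (the discounted reward accumulated over a window and the end-of-window state distribution) are both induced by the low-level primitive over $c$ steps. Thus the primitive error is exactly a model-misspecification (simulation-lemma) gap in the hyper-MDP, driven by the per-step discrepancy between $\pi_\theta$ and $\beta$. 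First I would convert the statistical error of primitive learning into such a per-step discrepancy, and then propagate it through the hierarchy.

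For the statistical step, I would use realizability ($\beta \in \Pi_\theta$) together with a standard finite-class generalization bound for density/policy estimation (e.g.\ an MLE bound over the finite class $\Pi_\theta$). With probability $1-\delta$ this controls the per-step total-variation discrepancy, $\mathrm{TV}\!\left(\pi_\theta(\cdot\mid s,z),\,\beta(\cdot\mid s,z)\right) \le \varepsilon_\theta$ with $\varepsilon_\theta=\sqrt{\ln(|\Pi_\theta|/\delta)/N}$, which is precisely the quantity appearing in the bound. The $\sqrt{\cdot}$ form is the usual consequence of a squared-Hellinger/KL rate of order $\ln(|\Pi_\theta|/\delta)/N$, and I would take this discrepancy as a high-probability per-step bound along the rollout.

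The heart of the proof is the compounding/simulation step. Within a single window of length $c$ and for a fixed starting pair $(s,z)$, I would couple the two rollouts step by step: since the transition kernel and reward are shared, at within-window step $i$ the state--action marginals of the two rollouts have drifted apart by at most $(i+1)\varepsilon_\theta$ (one fresh $\varepsilon_\theta$ per elapsed step). Summing the resulting per-step reward gaps over the window gives a per-window hyper-reward discrepancy of at most $r_{\text{max}}\varepsilon_\theta\sum_{i=0}^{c-1}(i+1)=r_{\text{max}}\varepsilon_\theta\,c(c+1)/2$ — this is the source of the $c(c+1)$ factor — while the end-of-window state distributions differ by at most $c\,\varepsilon_\theta$ in total variation. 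I would then apply the simulation lemma across windows in the hyper-MDP (hyper-discount $\gamma^c$): the geometric sum of the per-window reward discrepancies contributes the $1/(1-\gamma^c)$ horizon factor, and propagating the hyper-transition discrepancy through the value function (bounded by $r_{\text{max}}/(1-\gamma)$) contributes the additional $1/(1-\gamma)$ together with the leading $\gamma$. Collecting the terms yields $J(\widehat{\pi}_{\beta})-J(\widehat{\pi}_{\theta})\le \frac{\gamma c(c+1) r_{\text{max}}}{(1-\gamma)(1-\gamma^c)}\,\varepsilon_\theta$.

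I expect the main obstacle to be precisely this bookkeeping: tracking how the per-step primitive error inflates the state--action marginal gap \emph{within} a window (the $(i+1)$ growth that sums to $c(c+1)/2$) and then how the induced hyper-reward and hyper-transition gaps propagate \emph{across} windows, so that the two horizon factors $1/(1-\gamma)$ and $1/(1-\gamma^c)$ appear cleanly. A secondary subtlety is transferring the statistical error, which is naturally measured under the behavior/data distribution, to the trajectory distribution of the evaluated policy; I would handle this either by invoking the per-step bound uniformly or under a coverage assumption implicit in the realizable linear-MDP setting.
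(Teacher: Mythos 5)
Your route is structurally the same as the paper's, so the comparison is mostly about execution. The paper also proceeds in exactly two steps: (i) a finite-class MLE bound (it invokes Lemma 8 of Uehara et al.) giving $\mathbb{E}_{\beta}\bigl[D_{\mathrm{TV}}\bigl(\pi_\theta(\cdot\mid s,z)\,\|\,\beta(\cdot\mid s,z)\bigr)\bigr]\le\varepsilon_\theta$, then a convexity step to pass to the induced one-step transitions under the shared high-level policy; and (ii) a hierarchical simulation lemma (its auxiliary Lemma in the appendix, adapted from Nachum et al.) stating that per-step transition TV at most $\epsilon$ implies a value gap of at most $\frac{\gamma c(c+1)r_{\text{max}}}{(1-\gamma)(1-\gamma^c)}\epsilon$. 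The paper proves (ii) by writing visitation frequencies as linear operators, $d=(1-\gamma)A(I-\gamma^c P^c)^{-1}\mu$, and splitting $|d_2-d_1|$ into an across-window and a within-window term, whereas you propose an explicit coupling; these are equivalent bookkeeping schemes (linear drift inside a window, geometric sum across windows, then $|V^1-V^2|\le(1-\gamma)^{-1}r_{\text{max}}|d_2-d_1|$). Your closing caveat about transferring the error from the data distribution to the evaluated policy's trajectories is glossed over by the paper in essentially the same way, so no difference there.

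There is, however, one concrete gap in your final "collecting the terms" step: your terms do not assemble into the claimed constant, and the missing piece is the leading $\gamma$. Your within-window accounting starts with a fresh action discrepancy at step $i=0$, so the summed reward gap is $r_{\text{max}}\varepsilon_\theta\, c(c+1)/2$ with no factor of $\gamma$; adding your transition-propagation term, the total does not tend to $0$ as $\gamma\to 0$, while the target bound does, so the inequality you claim cannot follow for small $\gamma$ (it holds only for roughly $\gamma\ge 1/3$). The paper's $\gamma$ comes from a convention hidden in its appendix: there the reward is treated as a function of the state only, $V=(1-\gamma)^{-1}\int d(s)R(s)\,ds$, so the two rollouts agree exactly at the first step of each window, the step-$i$ gap scales with the state drift $i\varepsilon_\theta$ rather than $(i+1)\varepsilon_\theta$, and $\sum_{i}\gamma^i i$ supplies the factor $\gamma$ (your coupling then even improves the constant by a factor of $2$). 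With genuinely state-action-dependent rewards, as in the paper's own MDP definition and as in your accounting, the stated bound is actually false: take $c=1$, a single state, and $\pi_\theta$ deviating to a zero-reward action with probability $\varepsilon_\theta$; the gap is $\varepsilon_\theta r_{\text{max}}/(1-\gamma)$, which exceeds $2\gamma\varepsilon_\theta r_{\text{max}}/(1-\gamma)^2$ whenever $\gamma<1/3$. So to complete your proof you must either adopt the state-only-reward convention the paper implicitly uses, or accept the slightly larger constant that your (more honest) state-action bookkeeping actually yields.
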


On the contrary, the offline error comes from limited samples of the high-level dataset. With proper pessimism~\citep{jin2020provably}, we have the following offline-learning bound in the hyper-MDP. 
\begin{lemma}[Informal]
    \label{lemma:2}
    Suppose there exists an finite concentration coefficient $c^\dagger$ w.r.t. the optimal policy, then with probability $1-\delta$, the policy $\widehat{\pi}$ generated by PEVI satisfies
    \begin{align*}\label{eq:event_opt_explore_d}
        J(\pi^*_{\beta})-J(\widehat{\pi}_{\beta})
    \leq  \frac{2C r_{\text{\rm max}}}{(1-\gamma)(1-\gamma^c)} \sqrt{\frac{c^\dagger d^3\zeta}{N}}, \ \forall s\in \cS,
    \end{align*}
    where $C$ is a constant, $d$ is the dimension of the linear MDP, $N$ is the size of the dataset and $\zeta = \log{(4dN/(1-\gamma)\delta)}$.
    Please refer to the Appendix for the Formal description.
\end{lemma}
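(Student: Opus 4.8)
The plan is to reduce $J(\pi^*_\beta)-J(\widehat{\pi}_{\beta})$ to the suboptimality of PEVI in a single linear MDP and then invoke the standard pessimism guarantee. By Proposition~\ref{prop:1}, the every-$c$-step hyper-MDP induced by $\cM$ and the primitive $\beta$ is a linear MDP with the same feature dimension $d$; write $\phi(s,z)$ for its feature map. Its effective discount factor is $\gamma^c$, since one high-level decision consumes $c$ low-level steps, and its one-step reward is the accumulated return $\sum_{t=0}^{c-1}\gamma^t r_t$, bounded by $R_{\max}\doteq r_{\text{max}}(1-\gamma^c)/(1-\gamma)$. The total value is preserved, $R_{\max}/(1-\gamma^c)=r_{\text{max}}/(1-\gamma)$, and $\mathcal{D}_{\text{hi}}$ furnishes exactly the $N$ transitions of this hyper-MDP. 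Hence $J(\pi^*_\beta)-J(\widehat{\pi}_{\beta})$ equals the suboptimality of the PEVI policy $\widehat\pi$ in a linear MDP with parameters $(d,\gamma^c,R_{\max})$ evaluated from $\mu_0$, and the claim will follow uniformly over the initial state.

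Next I would run ridge-regularized least-squares value iteration on $\mathcal{D}_{\text{hi}}$ with the data-dependent pessimistic bonus $\Gamma_k(s,z)=\beta_N\,\|\phi(s,z)\|_{\Lambda_k^{-1}}$, where $\Lambda_k$ is the regularized empirical feature covariance at level $k$ and $\beta_N$ a scale chosen below. The extended value-difference (performance-difference) lemma of the PEVI analysis then gives, on the event $\mathcal{E}$ that the bonus is a \emph{valid uncertainty quantifier},
\begin{equation*}
J(\pi^*_\beta)-J(\widehat{\pi}_{\beta})\;\le\;2\sum_{k\ge 0}\gamma^{ck}\,\EE_{\pi^*_\beta}\bigl[\Gamma_k(s_k,z_k)\bigr],
\end{equation*}
where $\mathcal{E}=\{\,|(\widehat{\mathcal{B}}\widehat V_{k+1}-\mathcal{B}\widehat V_{k+1})(s,z)|\le\Gamma_k(s,z)\ \text{for all }k,s,z\,\}$ and the factor $2$ accounts jointly for the pessimistic underestimate and the Bellman error. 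Everything reduces to (i) showing $\Pr(\mathcal{E})\ge 1-\delta$ and (ii) bounding the expected bonus.

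The main obstacle is (i): certifying that $\Gamma_k$ dominates the Bellman estimation error uniformly. This is the technical core and needs a self-normalized (elliptical-potential) concentration inequality for the ridge estimate of the linear Bellman operator, combined with a covering-number argument over the class of iterates $\widehat V_{k+1}$. The covering step is what injects the logarithmic factor $\zeta=\log(4dN/((1-\gamma)\delta))$ and an extra power of $d$. Calibrating the scale $\beta_N=\Theta\bigl(V_{\max}\sqrt{d\zeta}\bigr)$ with $V_{\max}=R_{\max}/(1-\gamma^c)$ makes $\mathcal{E}$ hold with probability at least $1-\delta$; note this already contributes one factor $1/(1-\gamma^c)$ to the bound.

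Finally I would bound (ii). The finite concentration coefficient $c^\dagger$ w.r.t.\ $\pi^*_\beta$ lets me replace the empirical covariance by the optimal-policy-weighted population covariance, yielding $\EE_{\pi^*_\beta}\bigl[\|\phi\|_{\Lambda_k^{-1}}\bigr]\le\sqrt{c^\dagger d/N}$ up to constants; together with the $\sqrt{d}$ in $\beta_N$ and the $\sqrt{d}$ from covering, this produces the $\sqrt{d^3}$. Summing the geometric series $\sum_{k\ge 0}\gamma^{ck}=1/(1-\gamma^c)$ contributes the second factor $1/(1-\gamma^c)$, and substituting $R_{\max}=r_{\text{max}}(1-\gamma^c)/(1-\gamma)$ collapses the two horizon factors:
\begin{equation*}
J(\pi^*_\beta)-J(\widehat{\pi}_{\beta})\;\le\;\frac{2C\,R_{\max}}{(1-\gamma^c)^2}\sqrt{\frac{c^\dagger d^3\zeta}{N}}\;=\;\frac{2C\,r_{\text{max}}}{(1-\gamma)(1-\gamma^c)}\sqrt{\frac{c^\dagger d^3\zeta}{N}},
\end{equation*}
which is the claimed bound with all remaining numerical constants absorbed into $C$.
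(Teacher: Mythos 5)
Your proposal is correct and takes essentially the same route as the paper: reduce $J(\pi^*_{\beta})-J(\widehat{\pi}_{\beta})$ to the PEVI suboptimality in the every-$c$-step hyper-MDP, which by Proposition~\ref{prop:1} is a linear MDP with discount $\gamma^c$ and reward bound $r_{\text{max}}(1-\gamma^c)/(1-\gamma)$, then apply the standard linear-MDP pessimism guarantee and collapse the two horizon factors. The only difference is presentational: the paper invokes that guarantee as a black box (Lemma 3.1 of \citet{hu2022role}), whereas you sketch its internals (uncertainty quantifier, self-normalized concentration plus covering, concentration coefficient $c^\dagger$), and your accounting of the $d^{3/2}$ and $1/(1-\gamma^c)$ factors matches the paper's.
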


Finally, we bound the representation error by quantifying how many of the possible low-level policies can be recovered by different choices of $z$. Following~\citet{nachum2018near}, we make use of an auxiliary inverse primitive model $\Omega(s, a)$, which aims to predict which primitive $z$ will cause $\beta$ to yield an action $\tilde{a}=\beta(s,z)$ that induces a next
state distribution $P(s'|s,\tilde{a})$ similar to $P(s'|s,a)$. Formally, we have
\begin{lemma}
    \label{lemma:3}
     Consider a mapping $\Omega:S\times \Pi \rightarrow Z$ and let $\tilde{\beta} = \beta(s_t,\Omega(s,\pi))$. We define $\varepsilon_k : S\times\Pi \rightarrow \RR$ for $k \in [1,c]$ as 
    \begin{equation}
        \varepsilon_k(s_t,\pi) = \TV (P_\pi(s_{t+k}|s_{t+k-1}) || P_{\tilde{\beta}}(s_{t+k}|s_{t+k-1}) ),\notag
    \end{equation}
    If
    \begin{equation}
        \max_{\pi\in \Pi_\theta, k\in [1,c]} \EE_\pi[\varepsilon_k(s,\pi)] \leq \varepsilon_{\Omega},\notag
    \end{equation}
    Then we have 
    \begin{equation}
        J(\pi^*) - J(\pi^*_{\beta}) \leq \frac{\gamma c(c+1) r_{\text{max}} }{(1-\gamma)(1-\gamma^c)} \varepsilon_{\Omega}.
    \end{equation}
\end{lemma}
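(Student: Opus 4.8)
The plan is to bound the representation error by exhibiting a single hyper-MDP policy whose return is close to $J(\pi^*)$ and then invoking the optimality of $\pi^*_{\beta}$. Concretely, the inverse model induces the reconstructed primitive policy $\tilde{\beta}$, which at each decision point (block start) selects the latent $z=\Omega(s,\pi^*)$ and then rolls out $\beta(\cdot\mid s,z)$ for $c$ steps. This $\tilde{\beta}$ is a legitimate policy in the every-$c$-step hyper-MDP, so by optimality of $\pi^*_{\beta}$ we immediately obtain $J(\pi^*_{\beta})\geq J(\tilde{\beta})$, hence
\begin{equation*}
    J(\pi^*) - J(\pi^*_{\beta}) \leq J(\pi^*) - J(\tilde{\beta}).
\end{equation*}
It therefore suffices to control the performance gap between the target policy $\pi^*$ and its reconstruction $\tilde{\beta}$, which is exactly the regime where the hypothesis on $\varepsilon_k$ applies.

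First I would expand $J(\pi^*)-J(\tilde{\beta})$ with a telescoping (simulation-lemma) argument: writing the gap as a discounted sum over primitive steps of per-step differences, each per-step difference is driven by the total-variation distance between the one-step next-state kernels of $\pi^*$ and $\tilde{\beta}$, weighted by the value bound $r_{\text{max}}/(1-\gamma)$. Taking the expectation over the state-visitation distribution of $\pi^*$ turns the pointwise TV terms into $\EE_{\pi^*}[\varepsilon_k(s,\pi^*)]$, so applying the hypothesis with $\pi=\pi^*$ replaces every such term by $\varepsilon_{\Omega}$.

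Next I would accumulate the errors at the two relevant timescales. Within one block of length $c$ the two processes start from a common state, and the divergence of their state marginals after $k$ inner steps is at most $\sum_{j=1}^{k}\varepsilon_j\leq k\,\varepsilon_{\Omega}$; summing the resulting reward gaps over $k\in[1,c]$ produces the within-block factor of order $c(c+1)$. Across blocks the block-start states drift apart as well, but each successive block is discounted by $\gamma^{c}$, so the geometric series over blocks contributes the factor $1/(1-\gamma^{c})$; combined with the reward-to-value conversion $1/(1-\gamma)$ and the leading $\gamma$ (errors enter only from the first transition onward), this assembles the stated coefficient $\frac{\gamma c(c+1) r_{\text{max}}}{(1-\gamma)(1-\gamma^{c})}$.

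The main obstacle is the bookkeeping in this accumulation step: I must propagate the per-step kernel-level bounds $\varepsilon_k$ into differences of state marginals and of accumulated rewards consistently at both the within-block and cross-block scales, while keeping every expectation aligned with the visitation measure of $\pi^*$ so that $\EE_{\pi^*}[\varepsilon_k]\leq\varepsilon_{\Omega}$ can be applied uniformly. A subtlety worth isolating is that $\varepsilon_k$ measures divergence of \emph{transition} kernels rather than of instantaneous rewards, so the reward-matching term in the telescoping must be charged to the same transition divergence (using that rewards are bounded by $r_{\text{max}}$ and, where needed, the linear-MDP coupling between reward and transition features); pinning down the constant exactly, rather than up to a harmless multiplicative factor, is the most error-prone part.
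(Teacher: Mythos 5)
Your proposal is correct and follows essentially the same route as the paper: the paper also reduces the claim to comparing $\pi^*$ against its reconstruction $\tilde{\beta}$ (implicitly using that $\tilde{\beta}$ is a feasible hyper-MDP policy so $J(\pi^*_\beta)\geq J(\tilde{\beta})$), and then invokes its auxiliary Lemma~\ref{lemma:subopt-temp} with $P_1=P_\pi$, $P_2=P_{\tilde{\beta}}$, whose proof carries out exactly the two-timescale error accumulation you sketch (within-block drift giving the $c(c+1)$ factor, cross-block geometric series giving $1/(1-\gamma^c)$, and the $r_{\text{max}}/(1-\gamma)$ value conversion). The only difference is presentational: the paper packages the accumulation as a state-visitation-operator bound rather than a value-telescoping argument, and you make explicit the optimality step that the paper leaves unstated.
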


Together, we have the following theorem.

\begin{theorem}
    \label{theorem:1}
    Under the condition in Lemma~\ref{lemma:1}, \ref{lemma:2} and~\ref{lemma:3}, the suboptimality of a policy learned in the hyper-MDP with Algorithm~\ref{alg:1} satisfies
    \begin{align}
        \text{\rm SubOpt}(\widehat{\pi}_{\theta}) \leq   \frac{2C r_{\text{\rm max}}}{(1-\gamma)(1-\gamma^c)} \sqrt{\frac{c^\dagger d^3\zeta}{N}}+\notag\\
        \frac{\gamma c(c+1) r_{\text{max}} }{(1-\gamma)(1-\gamma^c)} (\varepsilon_{\Omega}+\varepsilon_{\theta}),
    \end{align}
    with high probability $1-2\delta$.
\end{theorem}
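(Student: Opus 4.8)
The plan is to instantiate the three-term error decomposition of $\text{\rm SubOpt}(\widehat{\pi}_{\theta})$ stated just above and bound each summand by the corresponding lemma. First I would confirm that the decomposition is an exact identity rather than an inequality: the three differences $J(\widehat{\pi}_{\beta})-J(\widehat{\pi}_{\theta})$, $J(\pi^*_{\beta})-J(\widehat{\pi}_{\beta})$, and $J(\pi^*)-J(\pi^*_{\beta})$ telescope, so their sum collapses to $J(\pi^*)-J(\widehat{\pi}_{\theta})=\text{\rm SubOpt}(\widehat{\pi}_{\theta})$. This step is purely algebraic and introduces no slack and no probabilistic content.

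Next I would substitute the per-term bounds. Lemma~\ref{lemma:1} controls the primitive error by $\frac{\gamma c(c+1) r_{\text{max}}}{(1-\gamma)(1-\gamma^c)}\varepsilon_{\theta}$ on an event of probability at least $1-\delta$; Lemma~\ref{lemma:2} controls the offline error by $\frac{2C r_{\text{\rm max}}}{(1-\gamma)(1-\gamma^c)}\sqrt{c^\dagger d^3 \zeta / N}$ on an event of probability at least $1-\delta$; and Lemma~\ref{lemma:3} controls the representation error by $\frac{\gamma c(c+1) r_{\text{max}}}{(1-\gamma)(1-\gamma^c)}\varepsilon_{\Omega}$ deterministically, provided the covering condition $\max_{\pi\in\Pi_\theta, k\in[1,c]}\EE_\pi[\varepsilon_k(s,\pi)]\leq\varepsilon_{\Omega}$ holds. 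Since the primitive-error and representation-error bounds carry the identical prefactor $\frac{\gamma c(c+1) r_{\text{max}}}{(1-\gamma)(1-\gamma^c)}$, I would merge them into a single term proportional to $(\varepsilon_{\Omega}+\varepsilon_{\theta})$, which reproduces the claimed right-hand side.

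The only real care is in the probability bookkeeping, and this is where I expect the sole subtlety to lie. The two stochastic guarantees (Lemmas~\ref{lemma:1} and~\ref{lemma:2}) each fail with probability at most $\delta$, drawn from two distinct randomness sources — the finite-sample estimate of the low-level primitive and the high-level PEVI estimate. By a union bound both hold simultaneously on an event of probability at least $1-2\delta$, while the deterministic Lemma~\ref{lemma:3} contributes no additional failure probability. On that event all three bounds are simultaneously in force, and summing them through the telescoping identity yields the theorem. The main obstacle is thus conceptual rather than computational: verifying that the randomness of the low-level sample estimate and the high-level value iteration are genuinely separate, so that a single union bound over two events suffices and the failure probability is $2\delta$ rather than being double-counted; once the decomposition is recognized as an exact identity, the remainder is simple addition.
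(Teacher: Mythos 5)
Your proposal is correct and follows essentially the same argument as the paper's proof: the exact telescoping decomposition of $\text{\rm SubOpt}(\widehat{\pi}_{\theta})$ into primitive, offline, and representation errors, followed by applying Lemmas~\ref{lemma:1}, \ref{lemma:2}, and~\ref{lemma:3} term by term and a union bound over the two stochastic events. Your bookkeeping is in fact slightly more explicit than the paper's (which simply invokes ``the union bound''), since you note that Lemma~\ref{lemma:3} is deterministic under its covering assumption, which is exactly why the failure probability is $2\delta$ rather than $3\delta$.
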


The proofs for all the above lemmas and theorems are provided in Appendix~\ref{proofs}. From Theorem~\ref{theorem:1}, we can see that there is a provable benefit in leveraging the temporal structure in the offline dataset, since a large skill length $c$ effectively reduces the offline error by a factor of $\frac{1-\gamma}{1-\gamma^c}$. However, a large skill length also incurs large representation errors and primitive errors, which forms a trade-off with different choice of $c$. We can see that it is crucial to choose a skill learning method that can faithfully represent the original behavior policy while keeping the policy representation ability. It is also important to choose a suitable skill length $c$ to balance the two errors. Theorem~\ref{theorem:1} gives a qualitative guidance in choosing the proper length. When the dataset has a bad coverage $c^\dagger$ or a large dimension $d$, we may use a larger skill length.
The general description of offline hierarchical learning is shown in Figure~\ref{fig:motivation}.

\begin{figure*}[t]
    \centering
    \includegraphics[scale=0.55]{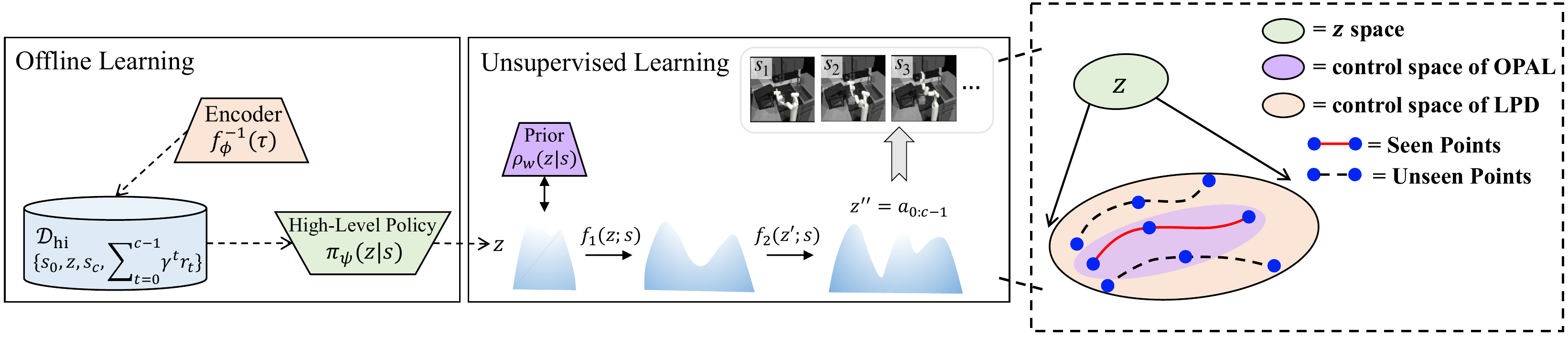}
    \caption{The framework of LPD.
    We first train the invertible primitive policies $f_{\psi}$ by the unsupervised learning. 
    Next, we use $f_{\psi}^{-1}$ to create Dataset $\mathcal{D}_{\text{hi}}$ and further train high-level policy $\pi_{\phi}$.
    LPD-based primitive policies have adequate control over the original policy space due to their strong representation ability.
    On the contrary, the control space of the OPAL-based primitives is limited to the vicinity of the dataset~(\emph{seen points}).}
    \label{fig:HORL}
\end{figure*}

    

\section{Method}
Following the above theoretical analysis, we find that current skill-based offline RL methods are significantly clipped by the limited representation ability of low-level learned skills to recover the original policy space.
Thus, we propose to learn a lossless primitive discovery, which helps the RL agent retain full control over the original policy space while faithfully representing the original dataset's behavior.


\subsection{Lossless Primitive Discovery}\label{limited_repre_issue}
We would like to extract a continuous space of temporally-extended primitives $\pi_{\theta}(a|s,z)$ from $\mathcal{D}_{\text{low}}$ which we can later use as an action space for learning downstream tasks with offline RL.
A common practice~\cite{ajay2020opal} adopts the following objective for learning $\pi_{\theta}$, which maximizes the evidence lower bound (ELBO):

\begin{align*}
    \max_{\theta,\psi,w} J(\theta,\phi,w)&=\hat{\mathbb{E}}_{\tau\sim\mathcal{\mathcal{D}_{\text{low}}},z\sim q_{\psi}(z|\tau)}
    \left[\sum_{t=0}^{c-1}\log \pi_{\theta}(a_t|s_t,z) \right.\\
    &\left.-\text{D}_{\text{KL}}(q_{\psi}(z|\tau)||\rho_{w}(z|s_0))\right]
\end{align*}

where $q_{\psi}(z|\tau)$ denotes the encoder, which encodes the trajectory $\tau$ of state-action pairs into distribution in latent space;
$\pi_{\theta}(a|s,z)$ denotes the decoder, which maximizes the conditional log-likelihood of actions in $\tau$ given the state and the latent vector;
$\rho_{w}(z|s_0)$ denotes the prior, which predicts the encoded distribution of the sub-trajectory $\tau$ from its initial state.
The KL-constraint enforces the distribution $q_{\psi}(z|\tau)$ to be close to just predicting the latent variable $z$ given the initial state of this sub-trajectory.
The conditioning on the initial state regularizes the distribution $q_{\psi}(z|\tau)$ to be consistent over the entire sub-trajectory.

Following the analysis of Theorem~\ref{theorem:1}, we want to minimize the representation error in addition to the primitive error in the low-level policy learning.
However, simple VAE-based models can lead to enormous  representation errors, as verified in the experiment. 
To solve this issue, we propose to learn an invertible mapping between skill space and original policy space $f_\psi:\mathcal{Z}\times\mathcal{S}\rightarrow\mathcal{A}^c$ to replace $q_{\psi}(z|\tau)$, which guarantees our primitive policy is lossless over the original policy space, and the overall objective can be written as

\begin{align*}
    &\min_{\psi,w} J(\psi,w)=\hat{\mathbb{E}}_{\tau\sim\mathcal{D}_{\text{low}}}\left[-\log p_z(f_{\psi}^{-1}(a_{0:c-1};s_0))\right.\\
    &\qquad \qquad \qquad \left.-\log \left|\det\left(\frac{\partial p_z(f_{\psi}^{-1}(a_{0:c-1};s_0))}{\partial a_{0:c-1}}\right)\right|
    \right] \\
    &\text{s.t.}\quad \hat{\mathbb{E}}_{\tau\sim\mathcal{D}_{\text{low}}}\left[\text{D}_{\text{KL}}(f^{-1}_{\psi}(a_{0:c-1};s_0)||\rho_{w}(z|s_0))\right] \leq \epsilon_{\text{KL}}
\end{align*}

where $f_{\psi}$ are parameterized by $\psi$ and $f_\psi^{-1}:\mathcal{A}^c\times\mathcal{S}\rightarrow\mathcal{Z} $ is the partial inverse of $f_\psi$.
The $a_{0:c-1}=f_{\psi}(z;s_{0})$ denotes the action sequence generated by $f_{\psi}$ when given the latent variable $z$ and state $s_0$, and vice versa.
$\det\left(\frac{\partial f(x)}{\partial x^{T}}\right)$ represents the Jacobian determinant.
Based on the works on normalizing flow~\cite{dinh2016density,dinh2014nice}, we can easily represent the invertible function with a neural network and deal with the Jacobian term neatly. Note that our design is general and not limited to \textit{flow}-based structures. Other designs to represent such an invertible $f$ for primitive skills can be an interesting avenue for future work.
The framework of flow-based structure is shown in Figure~\ref{fig:framework_lpd}.
Please refer to Appendix for the implementation details.

\subsection{High-Level Offline Policy Learning}
As for the high-level offline policy optimization, we would like to use the learned invertible mapping $f_{\psi}(z;s_0)$ and prior $\rho(z|s_0)$ to improve offline RL for downstream tasks.
Similar to~\citet{ajay2020opal}, we relabel the dataset $\mathcal{D}$ in terms of temporally extended transitions using $f_{\psi}^{-1}(a_{0:c-1};s_0)$.
Specifically, we create a dataset $\mathcal{D}_{\text{hi}}=\{(s_0^i,z_i,\sum_{t=0}^{c-1}\gamma^tr_t^{i},s_c^i)\}_{i=1}^{N}$ to learn high-level offline policy, where $z_i\sim f_{\psi}^{-1}(a_{0:c-1};s_0)$.
Next, we adopt the strong offline RL algorithm IQL~\cite{kostrikov2021offline} to learn $\pi_{\phi}(z|s)$:

\begin{figure}[t]
    \centering
    \includegraphics[scale=0.5]{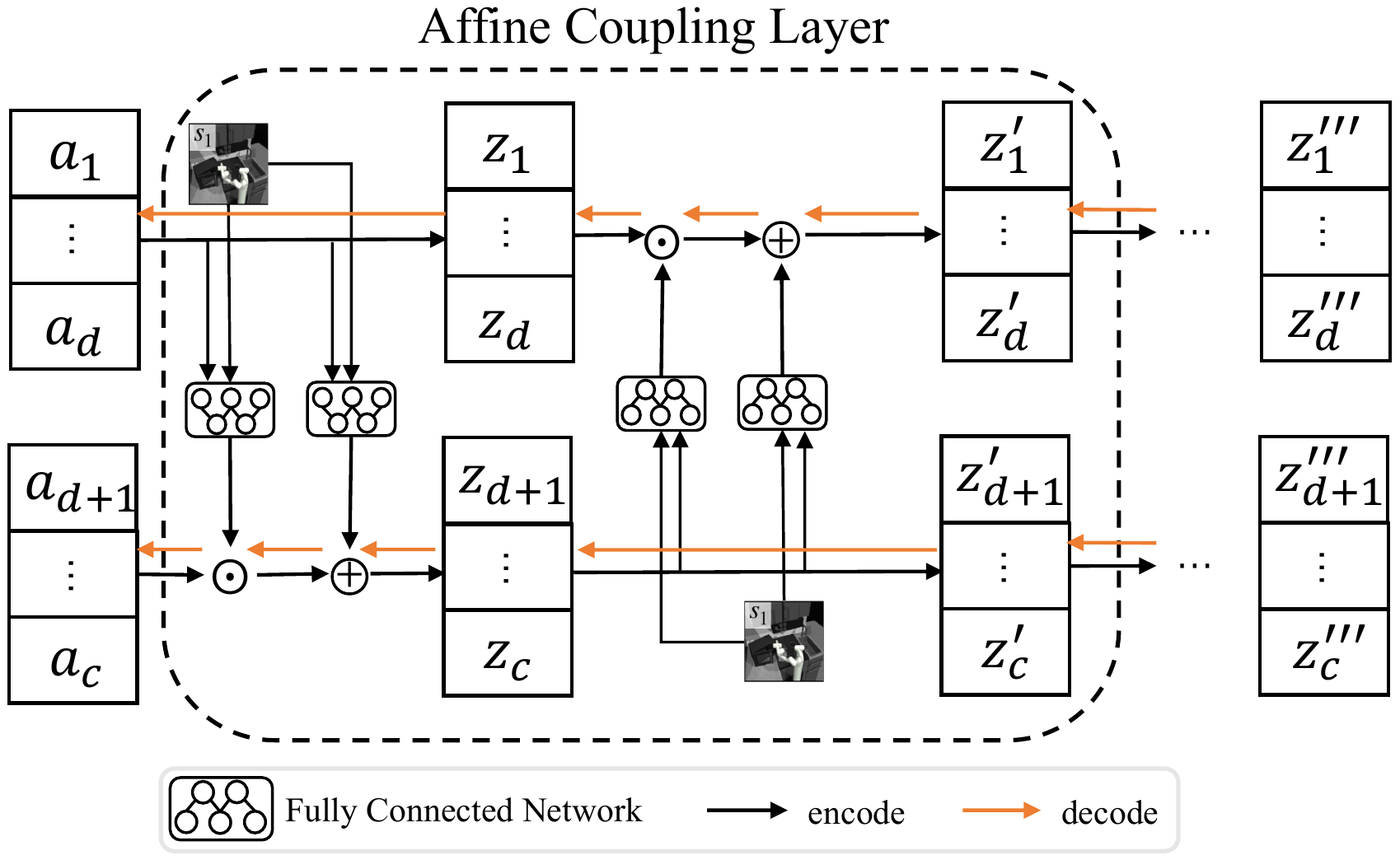}
    \caption{The framework of the flow-based structure. The decoding is $f_{\psi}$ and the encoding is $f^{-1}_{\psi}$.
    The whole process is invertible and lossless.}
    \label{fig:framework_lpd}
\end{figure}

\begin{align*}
    J_V(\varphi) &= \hat{\mathbb{E}}_{\tau\sim\mathcal{D}_{\text{hi}}}\left[L_2^{\lambda}\left(Q_{\vartheta'}(s_0,z) - V_{\varphi}(s_0)\right)\right]\\
    J_Q(\vartheta) &= \hat{\mathbb{E}}_{\tau\sim\mathcal{D}_{\text{hi}}}\left[\left(\sum_{t=0}^{c-1}\gamma^tr_{t} +\gamma V_{\varphi}(s_c)-Q_{\vartheta}(s_0,z)\right)^2\right]\\
    J_{\pi}(\psi) &= \hat{\mathbb{E}}_{\tau\sim\mathcal{D}_{\text{hi}}}\left[\exp(\beta(Q_{\vartheta}(s_0,z) - V_{\varphi}(s_0)))\log\pi_{\phi}(z|s_0)\right]
\end{align*}
where $L_2^{\lambda}(u)=|\lambda-\mathbb{I}(u<0)|u^2$ indicates the expectile regression~\cite{dabney2018distributional}.
$\beta$ is an inverse temperature to control the distribution constraint while maximizing the $Q$-values~\cite{yang2021believe, peng2019advantage}.
Our complete method is described in Algorithm~\ref{HORL_algorithm}.
The graphic overview of our method is shown in Figure~\ref{fig:HORL}.

\begin{algorithm}[tb]
\caption{IQL+LPD algorithm}
\label{HORL_algorithm}
\textbf{Input}: Offline dataset $\mathcal{D}$.\\
\textbf{Parameter}: $\phi$, $\varphi$, $\vartheta$, $\psi$, $w$.\\
\textbf{Output}: policy $\pi_{\phi}$.
\begin{algorithmic}[1]
\STATE Learn $f_{\psi}$ and $\rho_w$ with $\mathcal{D}_{\text{low}}$.
\STATE Create dataset $\mathcal{D}_{\text{hi}}^{r}$ using the trained mapping $f_{\psi}^{-1}$.\\
\WHILE{$t=1$ {\rm \bfseries to} $T$}
\STATE Train value function $Q_{\vartheta}$, $V_{\varphi}$ and policy $\pi_{\phi}$.
\STATE Update target networks:\\
\STATE \qquad $\varphi' \leftarrow (1-\alpha)\varphi' + \alpha\varphi$.\\
\STATE \qquad $\vartheta' \leftarrow (1-\alpha)\vartheta' + \alpha\vartheta$.
\ENDWHILE
\end{algorithmic}
\end{algorithm}

\begin{table*}[t]
    \centering
    \begin{tabular}{l|l|c|c|c|c|c|c}
    \hline
    \textbf{Type} & \textbf{Env} & IQL+LPD & IQL & CQL & OAMPI & TD3+BC & EMAQ \\
    \hline
    partial & kitchen & \textbf{74.9$\pm$1.1} $\uparrow$ & 46.3 & 49.8 & 35.0$\pm$3.3 & 7.5$\pm$1.3 & \textbf{74.6$\pm$0.6} \\
    mixed & kitchen & \textbf{69.2$\pm$1.9} $\uparrow$ & 51.0 & 51.0 & 47.5$\pm$4.1 & 1.5$\pm$0.2 & \textbf{70.8$\pm$2.3}\\
    complete & kitchen & \textbf{75.0$\pm$0.7} $\uparrow$ & 62.5 & 43.8 & 10.0$\pm$1.9 & 23.5$\pm$2.5 & 36.9$\pm$3.7 \\
    \hline
    fixed & Antmaze-umaze & \textbf{93.0$\pm$1.3} $\uparrow$ & 87.5 & 74.0 & 64.3$\pm$4.6 & 78.6$\pm$4.4 & \textbf{91.0$\pm$4.6} \\
    play & Antmaze-medium & \textbf{74.7$\pm$2.2} $\uparrow$ & \textbf{71.2} & 10.6 & 0.0$\pm$0.0 & 33.6$\pm$2.2 & 0.0$\pm$0.0 \\
    play & Antmaze-large & \textbf{56.2$\pm$3.6} $\uparrow$ & 39.6 & 0.2 & 0.3$\pm$0.1 & 21.4$\pm$3.3 & 0.0$\pm$0.0 \\
    diverse & Antmaze-umaze & 81.6$\pm$2.0 $\uparrow$ & 62.2 & 84.0 & 60.7$\pm$3.9 & 71.4$\pm$4.6 & \textbf{94.0$\pm$2.4}\\
    diverse & Antmaze-medium & \textbf{83.7$\pm$1.6} $\uparrow$ & 70.0 & 3.0 & 0.0$\pm$0.0 & 34.7$\pm$2.5 & 0.0$\pm$0.0 \\
    diverse & Antmaze-large & \textbf{52.8$\pm$1.1}$\uparrow$ & 47.5 & 0.0 & 0.0$\pm$0.0 & 25.9$\pm$2.7 & 0.0$\pm$0.0 \\
    \hline
    human & door & \textbf{15.1$\pm$2.5} $\uparrow$ & 4.3 & 9.9 & 2.8$\pm$0.1 & 0.0$\pm$0.0 & - \\
    human & hammer & 3.3$\pm$0.7 $\uparrow$ & 1.4 & \textbf{4.4} & 3.9$\pm$0.2 & 0.9$\pm$0.1 & - \\
    human & pen & 63.1$\pm$1.6 & \textbf{71.5} & 37.5 & 54.6$\pm$4.6 & 39.0$\pm$3.6 & - \\
    cloned & door & \textbf{8.1$\pm$1.0} $\uparrow$ & 1.6 & 0.4 & 0.4$\pm$0.1 & 0.0$\pm$0.0 & 0.2$\pm$0.3 \\
    cloned & hammer & 2.1$\pm$0.2 & 2.1 & 2.1 & 2.1$\pm$0.1 & 0.3$\pm$0.1 & 1.0$\pm$0.7 \\
    cloned & pen & \textbf{65.8$\pm$2.7} $\uparrow$ & 37.3 & 39.2 & 60.0$\pm$5.2 & 25.1$\pm$1.9 & 27.9$\pm$3.7 \\
    \hline
    \end{tabular}
    \caption{
    Performance of IQL+LPD with six baselines on D4RL tasks with the normalized score metric averaged over five random seeds.
    The $\uparrow$ denotes that LPD achieves a performance improvement over IQL.
    The results of IQL, CQL, and EMAQ are taken from the original papers. 
    We implement OAMPI and TD3+BC according to the official code.
    Please refer to Appendix for the detailed implementation of baselines.
    }
    \label{tab:d4rl_results}
\end{table*}

\section{Experiments}
In this section, we aim to address the following questions:
(1) How does the limited representation issue affect the agent's performance, and how does LPD improve on this problem?
(2) How does LPD perform compared to strong baseline offline methods and other skill-based offline methods?
(3) How does the choice of skill length $c$ and other factors affect the overall performance of LPD?
To answer these questions, we conduct extensive experiments on D4RL tasks.
Each experiment result is averaged over five random seeds with a standard deviation.

\subsection{Task Description}
We evaluate our method on a suite of standard and challenging offline tasks~(e.g., D4RL~\cite{fu2020d4rl}) including Franka kitchen, Antmaze, and Adroit. 
Specifically, the Kitchen tasks involve a Franka robot manipulating multiple objects either in an undirected manner or partially task-directed manner.
The task is to use the datasets to arrange objects in the desired configuration, with only a sparse 0-1 completion reward for every object that attains the target configuration.
Antmaze requires composing parts of sub-optimal and undirected trajectories to solve a specific point-to-point navigation problem.
Adroit tasks require controlling a 24-DoF robotic hand to imitate human behavior.
We adopt Kitchen-v0, Antmaze-v0, and Adroit-v0 in our experiments.

\subsection{Baselines}
To answer the first question posed at the start of this section, we compare IQL+LPD against state-of-the-art model-free and model-based offline methods, including Implicit Q-Learning~(IQL)~\cite{kostrikov2021offline}, Conservative Q-Learning~(CQL)~\cite{kumar2020conservative}, Onestep RL(OAMPI)~\cite{brandfonbrener2021offline}, TD3+BC~\cite{fujimoto2021minimalist} and EMAQ~\cite{ghasemipour2021emaq}.
These prior works have achieved superior results on D4RL.

As for the second question, we compare LPD with OPAL~\cite{ajay2020opal}, which is the first study to distill primitives from the offline dataset before applying offline methods to learn a primitive-directing high-level policy.

\begin{table}[t]
    \centering
    \begin{tabular}{|c|c|c|c|}
        \hline
        antmaze~(play) & umaze & medium & large \\
        \hline
        IQL+LPD & \textbf{93.0$\pm$1.3} & \textbf{74.7$\pm$2.2} & \textbf{56.2$\pm$3.6}\\
        \hline
        IQL+OPAL & 83.5$\pm$1.9 & 48.6$\pm$1.0 & \textbf{56.9$\pm$3.3}\\
        \hline
        \hline
        antmaze~(diverse) & umaze & medium & large \\
        \hline
        IQL+LPD & \textbf{81.6$\pm$2.0} & \textbf{83.7$\pm$1.6} & \textbf{52.8$\pm$1.1}\\
        \hline
        IQL+OPAL & 70.2$\pm$1.8 & 42.8$\pm$3.9 & \textbf{52.4$\pm$2.7}\\
        \hline
        \hline
        kitchen & partial & mixed & complete \\
        \hline
        IQL+LPD & \textbf{72.5$\pm$1.1} & \textbf{69.2$\pm$1.9} & \textbf{75.0$\pm$0.7} \\
        \hline
        IQL+OPAL & \textbf{74.9$\pm$0.3} & \textbf{65.7$\pm$3.6} & 11.5$\pm$2.0 \\
        \hline
        \hline
        adroit~(human) & door & hammer & pen \\
        \hline
        IQL+LPD & \textbf{15.1$\pm$2.5} & 3.3$\pm$0.7 & \textbf{63.1$\pm$1.6} \\
        \hline
        IQL+OPAL & 12.1$\pm$2.2 & 1.9$\pm$0.3 & 52.0$\pm$4.6 \\
        \hline
        \hline
        adroit~(cloned) & door & hammer & pen \\
        \hline
        IQL+LPD & \textbf{8.1$\pm$1.0} & 2.1$\pm$0.2 & \textbf{65.8$\pm$2.7} \\
        \hline
        IQL+OPAL & 6.0$\pm$1.0 & 1.1$\pm$0.4 & 46.9$\pm$3.6 \\
        \hline
    \end{tabular}
    \caption{Comparison between LPD and OPAL on D4RL tasks with the normalized score metric averaged over five random seeds.
    }
    \label{tab:horl_opal_d4rl}
\end{table}

\subsection{Main Results}
\paragraph{Experiments with limited representation issue}
We conduct experiments to show the limited representation issue.
A reasonable approach is to evaluate the performance of IQL+OPAL and IQL+LPD with skill length $c=1$.
Experimental results in Figure~\ref{perf_c_1} show a large margin between IQL+OPAL and IQL while the performance of IQL+LPD is consistent with IQL, which indicates the OPAL limits the ability of IQL.
Furthermore, we visualize the similarity $\epsilon$ between the agent's decision $a$ and the dataset, which is calculated by $\epsilon=\min_{\hat{a}\in\mathcal{C}}\|a-\hat{a}\|_{1}$, where $\mathcal{C}=\{(\hat{s},\hat{a})\mid\|s-\hat{s}\|_{1}\leq 20, (\hat{s},\hat{a})\in\mathcal{D}$\} is the similar state set to the agent.
The results in Figure~\ref{fig:generalization_figure} show the decoded action of OPAL is limited to the vicinity of the dataset, while LPD can represent more diverse actions.
This result does not contradict the reconstruction accuracy because we want to decode the action corresponding to the generalized $z$, which does not necessarily exist in the dataset.

\paragraph{Performance on D4RL} 
The experimental results in Table~\ref{tab:d4rl_results} show that IQL+LPD achieves state-of-the-art performance in many tasks and significantly outperforms IQL on nearly all tasks.
Most model-free offline methods such as CQL, OAMPI, and TD3+BC cannot perform well on kitchen and antmaze tasks, which strictly require approximate dynamical programming compared with locomotion tasks.
Although EMAQ is the exiting offline algorithm that achieves good performance in kitchen-partial and kitchen-mixed tasks, IQL+LPD achieves similar or better performance in most kitchen and antmaze tasks.
Moreover, we find TD3+BC and COMBO have poor performance on these challenging tasks, although we have tuned hyper-parameters. 
We suspect this is due to the following reason: while TD3+BC and COMBO are strong in mujoco tasks, they are less effective in more challenging tasks. This is because under complex environments, BC loss is insufficient to preserve conservatism and the learned model is likely to be inaccurate~\cite{kostrikov2021offline}
Finally, we are interested in comparing IQL+LPD and IQL, which directly demonstrates the superiority of the temporally-extended primitives. 
The sources for the scores of the baseline algorithms and the fine-tuning results are shown in Appendix.

\paragraph{Comparison with OPAL}
\label{comparison_OPAL}
We conduct a comparison between LPD and OPAL on D4RL tasks.
Experimental results in Table~\ref{tab:horl_opal_d4rl} show LPD outperforms OPAL in most tasks or achieves the same superior performance.
To ensure a fair comparison, we test the performance of IQL+OPAL with the most suitable steps $c\in\{1, 10\}$ and fine-tune the expectile ratio~$\lambda$ and temperature parameter~$\beta$ in IQL.
Then, we select the best parameter combination of IQL+OPAL.
We reproduce OPAL with authors' providing code via email.
Although the results of OPAL are a little different from the reported results in the original paper, we argue that the results are convincing and comparable for the following reasons:
(1) We incorporate OPAL with IQL. However, this is reasonable since we want the learned primitive policy to be general and can be combined with most offline RL methods.
(2) OPAL has the limited representation issue as the above analysis, which will be discussed in the following ablation study.

\begin{figure}[t]
    \centering
    \includegraphics[scale=0.45]{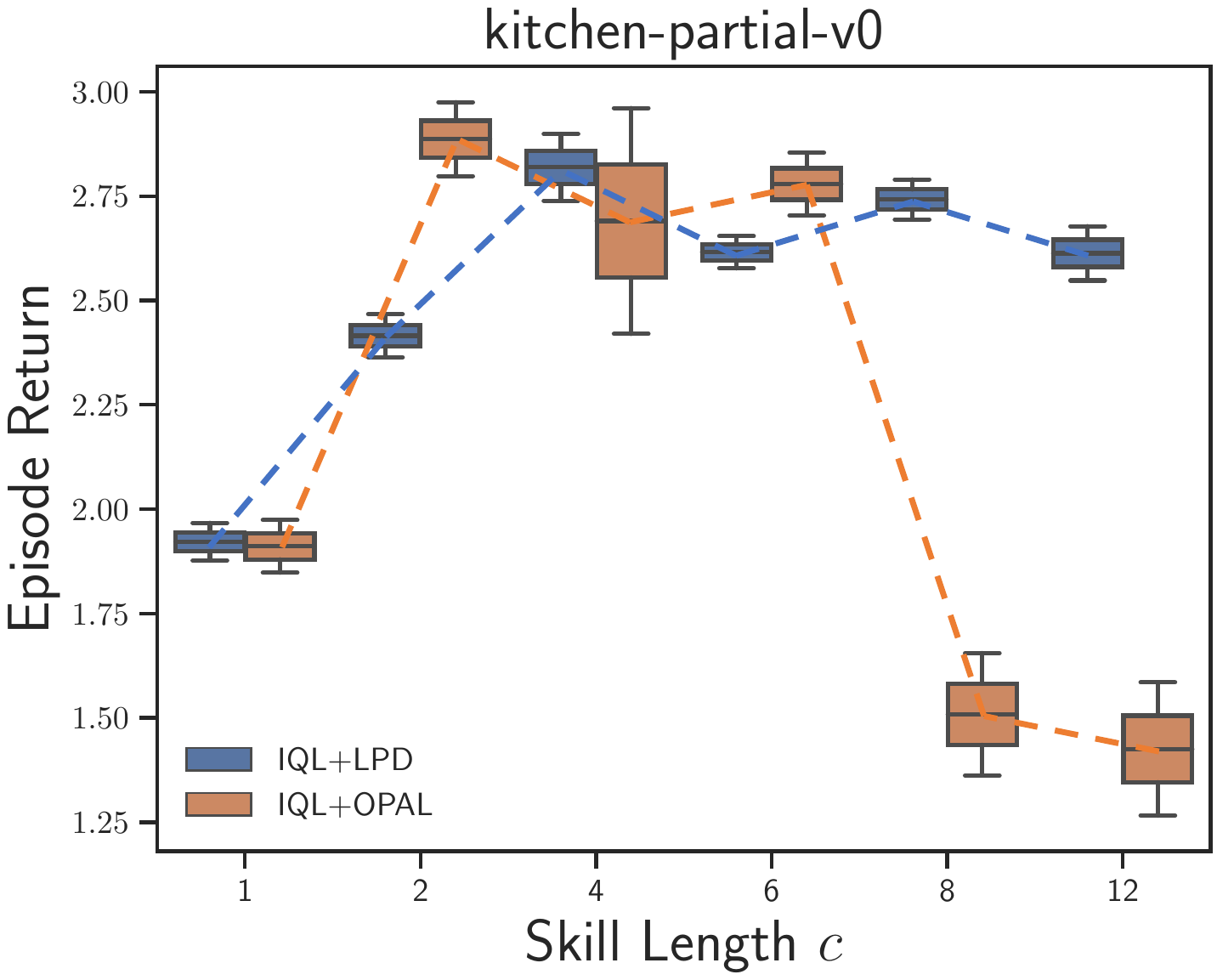}
    \caption{Impact of skill length on performance.}
    \label{fig:skill_step}
\end{figure}

\subsection{Ablation Study}
\paragraph{Impact of skill length on performance}
We conduct experiments on kitchen-partial-v0 to evaluate how final performance is impacted as a result of the skill length.
As one might expect, the results in Figure~\ref{fig:skill_step} show hierarchical offline control achieves better performance than one-level control.
Compared with IQL+OPAL, IQL+LPD can accommodate longer steps.
Furthermore, we find the optimal skill length in this task is around 5.
We also find two methods have similar returns when $c=1$, which does not contradict the above analysis since IQL has limited performance in this task. 
Complete results of the skill length are shown in Appendix.

\paragraph{Impact of the hyper-parameters}
We are concerned about whether the training hyper-parameters of the task policy $\pi_{\phi}$ need to be significantly adjusted on hierarchical offline control.
For this reason, we ran IQL+LPD on kitchen-partial-v0 with various parameters, such as the expectile ratio $\lambda\in[0.45, 0.9]$ and the temperature $\beta\in[0.35, 0.8]$.
The experimental results in Figure~\ref{fig:framework_ldp} show that the performance of IQL+LPD is robust to the changes of the hyper-parameters.
Furthermore, we only need to finetune $\beta$ around the original parameter without tuning $\lambda$.

\begin{figure}[t]
    \centering
    \includegraphics[scale=0.55]{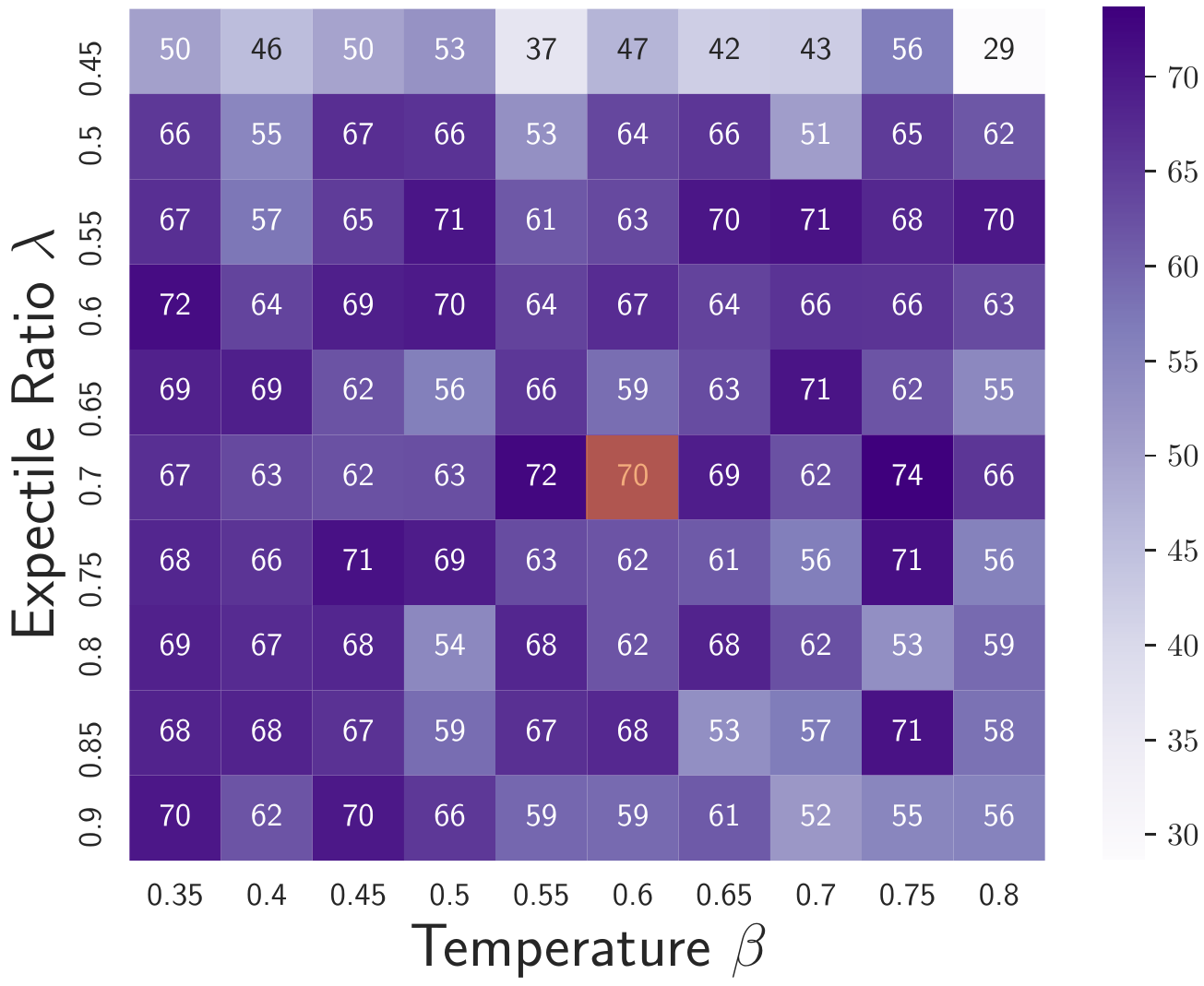}
    \caption{Impact of the hyper-parameters.
    We evaluate IQL+LPD in kitchen-partial-v0 task with various parameter $\lambda\in[0.45,0.9]$ and $\beta\in[0.35,0.8]$.
    The orange square is the default parameter of IQL and we adopt the normalized score metric.
    }
    \label{fig:framework_ldp}
\end{figure}

\section{Conclusion}
In this paper, we show that there are provable benefits in learning a hierarchical structure with offline datasets, and it is crucial for low-level primitive skills to be faithful to the original behavior and be expressive to recover the original state space.
We empirically show that current skill-based offline RL methods are significantly compromised by the limited representation ability of the learned low-level skills.
To solve this issue, we propose the offline Lossless Primitive Discovery (LPD), which learns an invertible function between latent vectors and temporally extended actions.
We show that our proposed method has a powerful representation ability to recover the original policy space and it achieves strong results on various D4RL tasks.
Learning such expressive skills also enables offline few-show learning for downstream tasks, and we leave this as an interesting future direction.

\clearpage
\section{Acknowledgments}
This work was funded by the National Natural Science Foundation of China (ID:U1813216, 62192751 and 61425024), National Key Research and Development Project of China under Grant 2017YFC0704100 and Grant 2016YFB0901900, in part by the 111 International Collaboration Program of China under Grant BP2018006, BNRist Program (BNR2019TD01009) and the National Innovation Center of High Speed Train R\&D project (CX/KJ 2020-0006), Science and Technology Innovation 2030 - “New Generation Artificial Intelligence” Major Project (No. 2018AAA0100904) and National Natural Science Foundation of China (62176135).

\bibliography{aaai23}

\clearpage
\appendix

\def\dtv{D_{\mathrm{TV}}}
\def\Wbar{\overline{w}}
\def\deriv{~d}
\section{Offline Algorithms}
\label{offline_algorithm}
In this subsection, we give an detailed algorithm for provable offline learning in linear MDPs. Specifically, we consider the \textit{pessimistic value iteration} \citep[PEVI;][]{jin2021pessimism}, as shown in Algorithm~\ref{alg:1}, which uses uncertainty as a negative bonus for value learning.
PEVI uses negative bonus $\Gamma(\cdot,\cdot)$ over standard $Q$-value estimation $\hat{Q}(\cdot,\cdot) =  (\hat\BB \hat{V})(\cdot)$ to reduce potential bias due to finite data, where $\hat\BB$ is the empirical estimation of $\BB$ from dataset $\cD$. We use the notion of  $\xi$-uncertainty quantifier as follows to formalize the idea of pessimism.
\begin{definition}[$\xi$-Uncertainty Quantifier] 
    We say $\Gamma :\cS\times\cA\to \RR$ is a $\xi$-uncertainty quantifier for $\hat\BB$ and $\widehat{V}$ if with probability $1-\xi$,
    \begin{equation}
     \big|(\hat\BB\hat{V})(s,a) - (\BB\hat{V})(s,a)\big|\leq \Gamma(s,a), 
    \label{eq:def_event_eval_err_general}
    \end{equation}
    for all~$(s,a)\in \cS\times \cA$.
    \label{def:uncertainty_quantifier}
\end{definition}

In linear MDPs, we can construct $\hat\BB\hat{V}$ and $\Gamma$ based on $\cD$ as follows, where $\hat\BB\hat{V}$ is the empirical estimation for $\BB\hat{V}$. For a given dataset $\cD=\{(s_\tau,a_\tau,r_\tau)\}_{\tau=1}^{N}$, we define the empirical mean squared Bellman error (MSBE) as
\begin{equation*}
M(w) = \sum_{\tau=1}^N \bigl(r_\tau + \gamma \widehat{V}(s_{\tau+1}) - \phi (s_\tau,a_\tau)^\top w\bigr)^2 + \lambda \norm{w}_2^2
\end{equation*}
Here $\lambda>0$ is the regularization parameter. Note that $\hat{w}$ has the closed form
\#\label{eq:w18}
&\hat{w} =  \Lambda ^{-1} \Big( \sum_{\tau=1}^{N} \Phi(s_\tau,a_\tau) \cdot \bigl(r_\tau + \gamma\hat{V}(s_{\tau+1})\bigr) \Bigr ) , \notag\\
&\text{where~~} \Lambda = \lambda I+\sum_{\tau=1}^N \Phi(s_\tau,a_\tau)  \Phi(s_\tau,a_\tau) ^\top. 
\#
Then we let $\hat\BB\hat{V}=\langle\phi,\hat w  \rangle$. Meanwhile, we construct $\Gamma$ based on $\cD$ as 
\#\label{eq:w05}
\Gamma(s, a) = \beta\cdot \big( \Phi(s, a)^\top  \Lambda ^{-1} \Phi(s, a)  \big)^{1/2}.
\#
Here $\beta>0$ is the scaling parameter.

\begin{algorithm}[H]
    \caption{Pessimistic Value Iteration}\label{alg:1}
    \begin{algorithmic}[1]
    \STATE {\bf Require}: Dataset $\cD_{\text{hi}}=\{(s_\tau,a_\tau,\sum_{i=0}^{c-1}\gamma^i r_{\tau+i}^{i})\}_{\tau=1}^{T}$, low-level skills $\pi_\theta$.
    \STATE Initialization: Set $\hat{V}(\cdot) \leftarrow 0$ and construct $\Gamma(\cdot, \cdot)$.
    \WHILE{not converged}
    \STATE Construct $(\hat\BB \hat{V})(\cdot,\cdot)$
    \STATE Set $\hat{Q}(\cdot,\cdot) \leftarrow  (\hat\BB \hat{V})(\cdot,\cdot)- \Gamma(\cdot,\cdot)$.
    \STATE Set $\hat{\pi} (\cdot \given \cdot) \leftarrow \argmax_{\pi}\EE_{\pi}{\left[\hat{Q}(\cdot, \cdot)\right]}$.
    \STATE Set $\hat{V}(\cdot) \leftarrow  \EE_{\hat{\pi}}{\left[\hat{Q}(\cdot, \cdot)\right]}$. \label{alg:general_Vhat}
    \ENDWHILE
    \STATE \textbf{Return} $\hat\pi_\theta$%
    \end{algorithmic}
\end{algorithm}

\section{Missing Proofs}
\label{proofs}
Inspired by \citet{nachum2018near}, we first prove the following lemma.
\begin{lemma}
    \label{lemma:subopt-temp}
    We define the transition error $\epsilon_k:S\times\Pi\to\RR$
    for $k\in[1, c]$ as, 
    \begin{equation}
        \epsilon_k(s_t) = \dtv(P_1(s_{t+k}|s_{t+k-1}) || P_2(s_{t+k}|s_{t+k-1})).
    \end{equation}
    If
    \begin{equation}
    \label{eq:dtv-cond-temp}
    \sup_{s_t\in S, k\in[1,c]} \EE_{s\sim d_1}[\epsilon_k(s)] \le \epsilon, 
    \end{equation}
    then we have
    \begin{equation}
        |J(\pi,M_1) - J(\pi,M_2)| \le \frac{\gamma c(c+1)r_{max}}{(1-\gamma^c)(1-\gamma)} \epsilon.
    \end{equation}

\end{lemma}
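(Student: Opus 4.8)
The statement is a simulation lemma: since $M_1$ and $M_2$ share the same reward function (bounded by $r_{max}$) and differ only in their transition kernels, the value gap is governed entirely by how far apart the induced state-occupancy measures drift. The plan is to (i) reduce the value gap to a discounted sum of total-variation distances between the time-$t$ state distributions $d_1^t,d_2^t$ induced by $\pi$ in the two models, (ii) bound each occupancy distance by the accumulated per-step transition errors through a telescoping argument, and (iii) assemble the constant by summing the geometric series over the every-$c$-step block structure.

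First I would write $J(\pi,M_i)=\sum_{t\ge 0}\gamma^t\,\mathbb{E}_{s\sim d_i^t}[\bar r(s)]$, where $\bar r(s)=\mathbb{E}_{a\sim\pi}[r(s,a)]\in[0,r_{max}]$. Because the reward function is identical in both models, the variational characterization of the TV distance gives $|\mathbb{E}_{d_1^t}[\bar r]-\mathbb{E}_{d_2^t}[\bar r]|\le r_{max}\,\dtv(d_1^t\,\|\,d_2^t)$, and hence $|J(\pi,M_1)-J(\pi,M_2)|\le r_{max}\sum_{t\ge 0}\gamma^t\,\dtv(d_1^t\,\|\,d_2^t)$.

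Next comes the core step, bounding $\dtv(d_1^t\,\|\,d_2^t)$. Writing $d_i^t=P_i\,d_i^{t-1}$ as the push-forward under the one-step kernel and inserting the hybrid term $P_2 d_1^{t-1}$, the triangle inequality splits the distance into $\dtv(P_2 d_1^{t-1}\,\|\,P_2 d_2^{t-1})$ and $\dtv(P_1 d_1^{t-1}\,\|\,P_2 d_1^{t-1})$. The first term is $\le\dtv(d_1^{t-1}\,\|\,d_2^{t-1})$ because applying a common Markov kernel is non-expansive in TV (data-processing), while the second is $\le\mathbb{E}_{s\sim d_1^{t-1}}[\dtv(P_1(\cdot|s)\,\|\,P_2(\cdot|s))]$ by joint convexity of TV (the TV of a mixture is at most the mixture of the TVs). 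Unrolling the recursion gives $\dtv(d_1^t\,\|\,d_2^t)\le\sum_{j=1}^{t}\mathbb{E}_{s\sim d_1^{j-1}}[\dtv(P_1(\cdot|s)\,\|\,P_2(\cdot|s))]$, and each summand is exactly an instance of the per-step error $\epsilon_k$ evaluated under $d_1$, so the hypothesis~\eqref{eq:dtv-cond-temp} bounds it by $\epsilon$; therefore $\dtv(d_1^t\,\|\,d_2^t)\le t\,\epsilon$.

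Finally I would substitute this bound and regroup the time index into blocks of length $c$, writing $t=mc+k$ with $m\ge 0$ and $0\le k<c$ and using $(\gamma^c)^m\gamma^k=\gamma^t$. Summing the within-block discounts $\sum_{k=0}^{c-1}\gamma^k=(1-\gamma^c)/(1-\gamma)$ together with the geometric block series over $m$ and the linear weight coming from $t\,\epsilon$ produces the stated factor $\frac{\gamma c(c+1)}{(1-\gamma^c)(1-\gamma)}$, the $c(c+1)$ arising from a crude per-block bound on the accumulated number of erroneous steps. The hard part will be the middle step: one must invoke the two TV inequalities in the correct order so that the expectation lands under $d_1$ (matching the hypothesis) rather than under $d_2$, and then carry out the block bookkeeping carefully. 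The constant is deliberately loose, so the only real subtlety is ensuring the grouping yields exactly the claimed coefficient rather than the tighter $\gamma/(1-\gamma)^2$ form that the collapsed sum would otherwise give.
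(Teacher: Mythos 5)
Your proof is correct in substance, and it reaches the claimed bound by a genuinely different route from the paper's. The paper works with discounted visitation measures written as resolvents, $d_i = (1-\gamma)A_i(I-\gamma^c P_i^c)^{-1}\mu$ with $A_i = I+\sum_{k=1}^{c-1}\gamma^k P_i^k$, splits $|d_2-d_1|$ via the hybrid term $A_2(I-\gamma^c P_1^c)^{-1}\mu$, bounds $|A_2|\le(1-\gamma^c)/(1-\gamma)$, applies a resolvent identity, and thereby reduces everything to $k$-step kernel differences $|(P_2^k-P_1^k)d_1^c|$ evaluated at the every-$c$-step visitation $d_1^c$; each such difference is bounded by $2k\epsilon$, and summing $\sum_{k=1}^c k$ yields the $c(c+1)$ factor. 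You instead telescope one step at a time over the raw time marginals to get $D_{\mathrm{TV}}(d_1^t \,\|\, d_2^t)\le t\epsilon$ and then sum the discounted series directly. One correction to your last step: the block regrouping you describe is unnecessary, and trying to reproduce the coefficient ``exactly'' is the wrong goal. The collapsed sum gives $|J(\pi,M_1)-J(\pi,M_2)|\le r_{\max}\,\epsilon\sum_{t\ge 0}t\gamma^t = \frac{\gamma\, r_{\max}}{(1-\gamma)^2}\,\epsilon$, and since $1-\gamma^c\le c(1-\gamma)$ implies $\frac{\gamma c(c+1)}{(1-\gamma^c)(1-\gamma)}\ge\frac{\gamma(c+1)}{(1-\gamma)^2}$, your bound is actually tighter than the stated one by a factor of at least $c+1$, so the lemma follows \emph{a fortiori}; a one-line comparison finishes the proof. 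What your elementary route buys is simplicity and a sharper constant; what the paper's operator decomposition buys is that the error terms appear as $k$-step rollout errors from block-start states (expectations under $d_1^c$), which mirrors how primitives are actually executed in the hyper-MDP and makes the $c$-dependence explicit. Finally, note a shared imprecision rather than a gap on your side: the hypothesis puts the expectation under the discounted visitation $d_1$, while your telescoping needs it under each time marginal $d_1^{j-1}$ (the paper likewise needs it under $d_1^c$ and its within-block propagations); neither proof closes this mismatch, so you are no worse off than the paper here.
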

\begin{proof}

Denote the $k$-step state transition distributions under either $P_1$ or $P_2$ as,
\begin{align}
    P_1^k(s'|s) &= P_1(s_{t+k}=s'|s_t=s), \notag\\
    P_2^k(s'|s) &= P_2(s_{t+k}=s'|s_t=s),
\end{align}
for $k\in[1,c]$.
Considering $P_1,P_2$ as linear operators, we may express the state visitation frequencies $d_1,d_2$ of $P_1, P_2$, respectively, as
\begin{align}
    d_1 &= (1-\gamma) A_1 (I - \gamma^c P_1^c)^{-1} \mu, \notag\\
    d_2 &= (1-\gamma) A_2 (I - \gamma^c P_2^c)^{-1} \mu,
\end{align}
where $\mu$ is a Dirac $\delta$ distribution centered at $s_0$ and
\begin{align}
    A_1 &= I + \sum_{k=1}^{c-1} \gamma^k P_1^k,\notag\\
    A_2 &= I + \sum_{k=1}^{c-1} \gamma^k P_2^k.
\end{align}
We will use $d_1^c,d_2^c$ to denote the every-$c$-steps $\gamma$-discounted state frequencies of $P_1, P_2$; i.e.,
\begin{align}
    d_1^c &= (1-\gamma^c) (I - \gamma^c P_1^c)^{-1} \mu, \notag\\
    d_2^c &= (1-\gamma^c) (I - \gamma^c P_2^c)^{-1} \mu.
\end{align}
By the triangle inequality, we have the following bound on the total variation divergence $|d_2 - d_1|$:
\begin{multline}
    \label{eq:d-tri}
    |d_2 - d_1| \le (1-\gamma)|A_2(I-\gamma^cP_2^c)^{-1}\mu - A_2(I-\gamma^cP_1^c)^{-1}\mu| \\+
    (1-\gamma)|A_2(I-\gamma^cP_1^c)^{-1}\mu - A_1(I-\gamma^cP_1^c)^{-1}\mu|.
\end{multline}
We begin by attacking the first term of Equation~\eqref{eq:d-tri}. We note that
\begin{equation}
    |A_2| \le |I| + \sum_{k=1}^{c-1}\gamma^k |P_2^k| = \frac{1-\gamma^c}{1-\gamma}.
\end{equation}
Thus the first term of Equation~\eqref{eq:d-tri} is bounded by
\begin{multline}
    \label{eq:long-eqn}
    (1-\gamma^c)|(I-\gamma^cP_2^c)^{-1}\mu - (I-\gamma^cP_1^c)^{-1}\mu| \\
    = (1-\gamma^c)|(I-\gamma^cP_2^c)^{-1}((I - \gamma^cP_1^c) - (I - \gamma^cP_2^c)) (I-\gamma^cP_1^c)^{-1}\mu|  \\
    = \gamma^c|(I-\gamma^cP_2^c)^{-1}(P_2^c - P_1^c) d_1^c|.
\end{multline}
By expressing $(I-\gamma^cP_2^c)^{-1}$ as a geometric series and employing the triangle inequality, we have $|(I-\gamma^cP_2^c)^{-1}|\le (1-\gamma^c)^{-1}$, and we thus bound the whole quantity (\eqref{eq:long-eqn}) by
\begin{equation}
    \label{eq:first-term}
    \gamma^c(1-\gamma^c)^{-1}|(P_2^c - P_1^c)d_1^c|.
\end{equation}

We now move to attack the second term of Equation~\eqref{eq:d-tri}.
We may express this term as
\begin{equation}
    (1-\gamma)(1-\gamma^c)^{-1}|(A_2 - A_1)d_1^c|.
\end{equation}
Furthermore, by the triangle inequality, we have
\begin{equation}
    \label{eq:second-term}
    |(A_2 - A_1)d_1^c| \le \sum_{k=1}^{c-1} \gamma^k |(P_2^k - P_1^k)d_1^c|.
\end{equation}
Combining Equation~\eqref{eq:first-term} and \eqref{eq:second-term}, we have
\begin{align}
    |d_2 - d_1| \leq &\gamma(1-\gamma^c)^{-1}\sum_{k=1}^{c} w_k |(P_2^k - P_1^k)d_1^c| \notag\\
    =&2\gamma(1-\gamma^c)^{-1}\cdot \notag\\
    &\sum_{k=1}^c \E_{s\sim d_1^c}[w_k \dtv(P_1^k(s'|s) || P_2^k(s'|s))] \notag\\
    \leq & 2\gamma(1-\gamma^c)^{-1}\sum_{k=1}^c  k \epsilon\notag\\
    \leq & \frac{\gamma c(c+1)}{1-\gamma^c} \epsilon,
\end{align}

where $w_k = \gamma^{k-1}(1-\gamma)$ if $k<c$ and $w_k=\gamma^{k-1}$ if $k=c$. The second inequality uses the fact that $w_k\leq 1$ for all $k\in[1,c]$.

We now move to consider the difference in values.  We have
\begin{align}
    V^{1}(s_0) &= (1-\gamma)^{-1}\int_{S} d_1(s) R(s) \deriv s, \\
    V^{2}(s_0) &= (1-\gamma)^{-1}\int_{S} d_2(s) R(s) \deriv s.
\end{align}
Therefore, we have
\begin{align}
    |V^{1}(s_0) - V^{2}(s_0)| &\le (1-\gamma)^{-1} r_{max} |d_2 - d_1| \\
    &\le \frac{c(c+1) }{(1-\gamma^c)(1-\gamma)} r_{max} \epsilon,
\end{align}
as desired.
\end{proof}
\subsection{Proof of Proposition~\ref{prop:1}}
\begin{proof}
    The $1$-step transition probability can be writen as 
\begin{align*}
    P_{\beta}(s_{t+1}|s_{t},z) &= \int{P(s_{t+1}|s_t,a_t)\beta(a_t| s_t,z)da_t} \\
    &= \int{\beta(a_t| s_t,z)\Psi(s_t,a_t,s_{t+1})^T\omega d a_t} \\
    &\doteq \Psi_1(s_t,a_t,s_{t+1})^T \omega .
\end{align*}
It is easy to see that $\|\Psi_1\|_\infty\leq 1$.
The $2$-step transition can be similarly writen as 
\begin{align*}
    &P_{\beta}(s_{t+2}|s_{t},z) \\
    &=  \int{\Psi_1(t)^T\omega \Psi_1(t+1)^T\omega ds_{t+1}} \\
    &\doteq \Psi_2^T(s_t,a_t,s_{t+2}) \omega,
\end{align*}
where $\Psi_1(t) = \Psi_1(s_t,a_t,s_{t+1})$ and $\Psi_1(t+1) = \Psi_1(s_{t+1},a_{t+1},s_{t+2})$.
Since $\Psi_1(s_t,a_t,s_{t+1})^T\omega$ is a probability over $s_{t+1}$, we also have $\|\Psi_2\|_\infty\leq 1$.
Thus the $c$-step transition probability can be written in the form $P_{\beta}(s_{t+c}|s_{t},z) = \Psi_c(s_t,a_t,s_{t+c})^T \omega $ and we have $\|\Psi_c\|_\infty\leq 1$.
For the reward function, we have 
\begin{equation}
    r_{\beta,c}(s_t,z) = \EE_{\beta}\left[\sum_{k=0}^{c-1}{\gamma^c \Phi(s_{t+k},a_{t+k})^T \omega}\right] \doteq \Phi_c(s_t,z)^T \omega,
\end{equation}
which is also linear with respect to $\omega$. And we have $r_{\text{max},c} = \sum_{k=0}^{c-1}{\gamma^c r_{\text{max}}} = \frac{1-\gamma^c}{1-\gamma} r_{\text{max}}$.
\end{proof}

\subsection{Proof of Lemma~\ref{lemma:1}}
Following Lemma 8 in \citet{uehara2021pessimistic}, we have the following guarantee for the low-level skill learning with the MLE objective:

\begin{equation}
    \EE_{\beta} [\dtv (\pi_\theta(\cdot|s,z)||\beta(\cdot|s,z))] \leq \varepsilon_{\theta} = \sqrt{\frac{\ln{|\Pi_\theta|/{\delta}}}{N}}
\end{equation}

Note that 
$P_{\pi,\beta}(\cdot|s) = \int \pi(z|s)\beta(\cdot|s,z) dz $ and $P_{\pi,\theta}(\cdot|s) = \int \pi(z|s)\pi_\theta(\cdot|s,z) dz $,
we also have 
\begin{equation}
    \EE_{\beta} [\dtv (P_{\pi,\beta}(\cdot|s)||P_{\pi,\theta}(\cdot|s))] \leq \varepsilon_{\theta},
\end{equation}
then the result follows immediately from Lemma~\ref{lemma:subopt-temp}.
\subsection{Proof of Lemma~\ref{lemma:2}}
Here we first provide a formal statement of Lemma~\ref{lemma:2}.
\begin{lemma}[Formal]
    \label{lemma:2_formal}
    Suppose there exists an absolute constant 
    \begin{align}
        \label{eq:event_opt_explore}
        c^\dagger = &\sup_{x\in \RR^d} \frac{x^{\top}\Sigma_{\pi^*,s}x}{x^{\top}\Sigma_{\cD} x} < \infty,
    \end{align}
    for all $s\in\cS$ with probability $1-\delta/2$, where 
    \$&\Sigma_{\cD}~~~=\frac{1}{N}\sum_{\tau=1}^N{\left[\Phi(s_\tau,a_\tau)\Phi(s_\tau,a_\tau)^\top\right]},\notag\\
     &\Sigma_{\pi^*,s}=\EE_{{\pi^*}}\bigl[\Phi(s_\tau,a_\tau)\Phi(s_\tau,a_\tau)^\top\biggiven s_0=s\bigr].
     \$

    In Algorithm 2, we set 
    \begin{equation*}
        \lambda =1,~ \beta= C \cdot d r_{\text{\rm max}}\sqrt{\zeta}/(1-\gamma^c), ~ \zeta = \log{(4dN/(1-\gamma^c)\delta)},
    \end{equation*}
    where $C>0$ is an absolute constant  and $\delta \in (0,1)$ is the confidence parameter. Then with probability $1-\delta$, the policy $\widehat{\pi}$ generated by Algorithm 2 satisfies
    \begin{align*}\label{eq:event_opt_explore_d_2}
        J(\pi^*_{\beta})-J(\widehat{\pi}_{\beta})
    \leq  \frac{2C r_{\text{\rm max}}}{(1-\gamma)(1-\gamma^c)} \sqrt{c^\dagger d^3\zeta /N},  \forall s\in \cS.
    \end{align*}
\end{lemma}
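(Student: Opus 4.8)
The plan is to reduce the claim to a standard pessimistic-value-iteration (PEVI) guarantee for a discounted linear MDP and then instantiate it in the every-$c$-step hyper-MDP. By Proposition~\ref{prop:1}, the hyper-MDP is itself a linear MDP with the aggregated feature $\Phi_c$, effective discount $\gamma^c$, and per-step reward bounded by $r_{\text{max},c}=\frac{1-\gamma^c}{1-\gamma}r_{\text{max}}$; in particular its value functions are bounded by $\frac{r_{\text{max},c}}{1-\gamma^c}=\frac{r_{\text{max}}}{1-\gamma}$, and the matrices $\Sigma_{\cD}$, $\Sigma_{\pi^*,s}$, $\Lambda$ are all built from $\Phi_c$. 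Thus it suffices to prove a PEVI suboptimality bound for this linear MDP, following the analysis of \citet{jin2021pessimism} but transported from the episodic setting to the infinite-horizon discounted one, where the effective planning horizon is $\frac{1}{1-\gamma^c}$.

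First I would show that the bonus $\Gamma(s,z)=\beta\,(\Phi_c(s,z)^\top\Lambda^{-1}\Phi_c(s,z))^{1/2}$ defined in Equation~\eqref{eq:w05} is a valid $\xi$-uncertainty quantifier in the sense of Definition~\ref{def:uncertainty_quantifier}. Using the closed form of $\hat w$ in Equation~\eqref{eq:w18}, I would decompose $(\hat\BB\hat V-\BB\hat V)(s,z)$ into a regularization term (controlled by $\lambda=1$ and $\|\omega\|_2\le\sqrt d$) and a stochastic noise term of the form $\Phi_c^\top\Lambda^{-1}\sum_\tau \Phi_c(s_\tau,z_\tau)\,\eta_\tau$, where $\eta_\tau=r_\tau+\gamma^c\hat V(s_{\tau+c})-(\BB\hat V)(s_\tau,z_\tau)$. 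The stochastic term would be bounded by a self-normalized (elliptical-potential) concentration inequality. Because $\hat V$ is itself data-dependent, this must hold uniformly over the value-iterate class, which I would handle with a covering-number argument; the resulting logarithmic factor is exactly $\zeta=\log\!\big(4dN/((1-\gamma^c)\delta)\big)$, and the range $\frac{r_{\text{max}}}{1-\gamma}$ of $\hat V$ together with the dimension $d$ fixes the scale of $\beta$. This event holds with probability $1-\delta/2$.

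Next I would invoke the general $\Gamma$-based suboptimality decomposition of PEVI: on the quantifier event, the output policy obeys $\text{SubOpt}(\widehat\pi_\beta)=J(\pi^*_\beta)-J(\widehat\pi_\beta)\le 2\,\EE_{\pi^*}\big[\sum_{t\ge0}\gamma^{ct}\,\Gamma(s_t,z_t)\big]$, i.e. the suboptimality is twice the discounted bonus accumulated along the optimal trajectory. To convert this into the stated rate I would bound the per-step bonus using the coverage coefficient. By Jensen's inequality, $\EE_{\pi^*}[(\Phi_c^\top\Lambda^{-1}\Phi_c)^{1/2}]\le(\EE_{\pi^*}[\Phi_c^\top\Lambda^{-1}\Phi_c])^{1/2}=(\operatorname{tr}(\Lambda^{-1}\Sigma_{\pi^*,s}))^{1/2}$, and since $\Lambda=\lambda I+N\Sigma_{\cD}\succeq N\Sigma_{\cD}$ while the coverage hypothesis gives $\Sigma_{\pi^*,s}\preceq c^\dagger\Sigma_{\cD}$, it follows that $\operatorname{tr}(\Lambda^{-1}\Sigma_{\pi^*,s})\le\frac1N\operatorname{tr}(\Sigma_{\cD}^{-1}\Sigma_{\pi^*,s})\le\frac{c^\dagger d}{N}$. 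Hence $\EE_{\pi^*}[\Gamma]\le\beta\sqrt{c^\dagger d/N}$, and summing the geometric series $\sum_{t\ge0}\gamma^{ct}=\frac{1}{1-\gamma^c}$ while substituting $\beta\asymp\frac{dr_{\text{max}}\sqrt\zeta}{1-\gamma}$ yields $\text{SubOpt}(\widehat\pi_\beta)\le\frac{2Cr_{\text{max}}}{(1-\gamma)(1-\gamma^c)}\sqrt{c^\dagger d^3\zeta/N}$. A union bound over the concentration and coverage events gives overall probability $1-\delta$.

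I expect the main obstacle to be establishing the uncertainty quantifier in the second step: the self-normalized bound must be made uniform over the data-dependent iterates $\hat V$ via covering, and the $c$-step reward rescaling $r_{\text{max},c}$ and discount $\gamma^c$ from Proposition~\ref{prop:1} must be threaded carefully through the concentration so that the bonus scale $\beta$ produces precisely the $(1-\gamma)(1-\gamma^c)$ denominator rather than $(1-\gamma^c)^2$. The coverage-to-trace reduction and the geometric summation are then essentially routine.
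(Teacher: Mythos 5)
Your proposal is correct and, at its mathematical core, follows the same route as the paper: both arguments reduce the claim to the every-$c$-step hyper-MDP via Proposition~1 (effective discount $\gamma^c$, aggregated reward bounded by $r'_{\max}=\frac{1-\gamma^c}{1-\gamma}r_{\max}$, hence value range $r_{\max}/(1-\gamma)$) and then apply the pessimistic value iteration guarantee for discounted linear MDPs. The difference is granularity: the paper's proof is two lines, invoking Lemma~3.1 of Hu et al.\ (2022) as a black box to obtain $J(\pi^*_\beta)-J(\widehat{\pi}_\beta)\le \frac{2Cr'_{\max}}{(1-\gamma^c)^2}\sqrt{c^\dagger d^3\zeta/N}$ and then substituting $r'_{\max}$, whereas you re-derive that guarantee from first principles (validity of the bonus as a $\xi$-uncertainty quantifier via self-normalized concentration and covering, the pessimism suboptimality decomposition, the Jensen/trace coverage bound, and the geometric sum $\sum_t \gamma^{ct}=\frac{1}{1-\gamma^c}$). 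What your longer route buys is transparency about scaling: it makes explicit that the bonus coefficient must scale with the hyper-MDP value range, i.e.\ $\beta \asymp d\, r_{\max}\sqrt{\zeta}/(1-\gamma)$, which is exactly what produces the $(1-\gamma)(1-\gamma^c)$ denominator; read literally, the $\beta$ in the lemma statement (original $r_{\max}$ with a $1/(1-\gamma^c)$ factor) would be too small to be a valid uncertainty quantifier, a sloppiness that the paper's citation-style proof glosses over and that your derivation correctly flags as the delicate point. One small technical remark: your step $\operatorname{tr}(\Lambda^{-1}\Sigma_{\pi^*,s})\le \frac{1}{N}\operatorname{tr}(\Sigma_{\mathcal{D}}^{-1}\Sigma_{\pi^*,s})$ presumes $\Sigma_{\mathcal{D}}$ is invertible; it is cleaner to use $\operatorname{tr}(\Lambda^{-1}\Sigma_{\pi^*,s})\le c^\dagger\operatorname{tr}(\Lambda^{-1}\Sigma_{\mathcal{D}})\le c^\dagger d/N$, which follows from $N\Sigma_{\mathcal{D}}\preceq\Lambda$ and requires no invertibility.
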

\begin{proof}
    From Proposition~\ref{prop:1} we can see that the hyper-MDP is also an linear MDP with $r_{\text{max}}' = \frac{1-\gamma^c}{1-\gamma} r_{\text{max}}$. Following the similar argument in Lemma 3.1 in ~\citet{hu2022role}, we have 
    \begin{align*}
        J(\pi^*_{\beta})-J(\widehat{\pi}_{\beta})
    &\leq  \frac{2C r'_{\text{\rm max}}}{(1-\gamma^c)^2} \sqrt{c^\dagger d^3\zeta /N} \\
    &= \frac{2C r_{\text{\rm max}}}{(1-\gamma)(1-\gamma^c)} \sqrt{c^\dagger d^3\zeta /N},  \forall s\in \cS.
    \end{align*}
\end{proof}



\subsection{Proof of Lemma~\ref{lemma:3}}
\begin{proof}
    Let $P_1 = P_\beta$ and $P_2 = P_{\tilde{\beta}}$ and the result follows immediately from Lemma~\ref{lemma:subopt-temp}.
\end{proof}

\subsection{Proof of Theorem~\ref{theorem:1}}
\begin{proof}
From the definition, we have 
\begin{align*}
    &\text{\rm SubOpt}(\widehat{\pi}_{\theta}) =   J(\pi^*)-J(\widehat{\pi}_{\theta}) \\
    &= {J(\widehat{\pi}_{\beta})-J(\widehat{\pi}_{\theta})}+J(\pi^*_{\beta})-J(\widehat{\pi}_{\beta}) +J(\pi^*)-J(\pi^*_{\beta}).
\end{align*}

From Lemma~\ref{lemma:1}, \ref{lemma:2}, \ref{lemma:3} and the union bound, we have the result immediately.
\end{proof}


\clearpage

\section{Implementation Details}
\subsection{Baseline details}
In this section, we provide details about baselines we ran ourselves.
For scores of baselines previously evaluated on standardized tasks, we provide the source of the listed score.

\paragraph{IQL} The performance of IQL on the D4RL tasks is taken from~\cite{kostrikov2021offline}.
\paragraph{CQL} The performance of CQL on the D4RL tasks is taken from~\cite{kumar2020conservative}.
\paragraph{OAMPI} The performance of OAMPI on the adroit tasks is taken from~\cite{brandfonbrener2021offline}.
We ran OAMPI using official implementation on the kitchen and antmaze tasks from the authors.
\paragraph{TD3+BC} We ran TD3+BC using official implementation on D4RL tasks from the authors.
We tuned over the hyperparameter: $\alpha\in[0.5, 4.5]$.
\paragraph{EMAQ} The performance of EMAQ on the D4RL tasks is taken from~\cite{ghasemipour2021emaq}.
\paragraph{COMBO} We ran COMBO using official implementation on D4RL tasks from the authors.

\subsection{OPAL experiment details}
We used the code provided by the author via email.
Specifically, the encoder $q_{\psi}(z|\tau)$ takes in state-action trajectory $\tau$ of length $c$ and outputs the mean and log standard deviation of the latent vector.
The prior $\rho_{w}(z|s)$ takes in the current state $s$ and outputs the latent vector's mean and log standard deviation.
The primitive policy $\pi_{\theta}(a|s,z)$ has the same architecture as the prior, but it takes in the state and latent vector and produces the mean and log standard deviation of the action.
The framework of OPAL is shown in Figure~\ref{fig:framework_opal}.
\begin{figure}[h]
    \centering
    \includegraphics[scale=0.6]{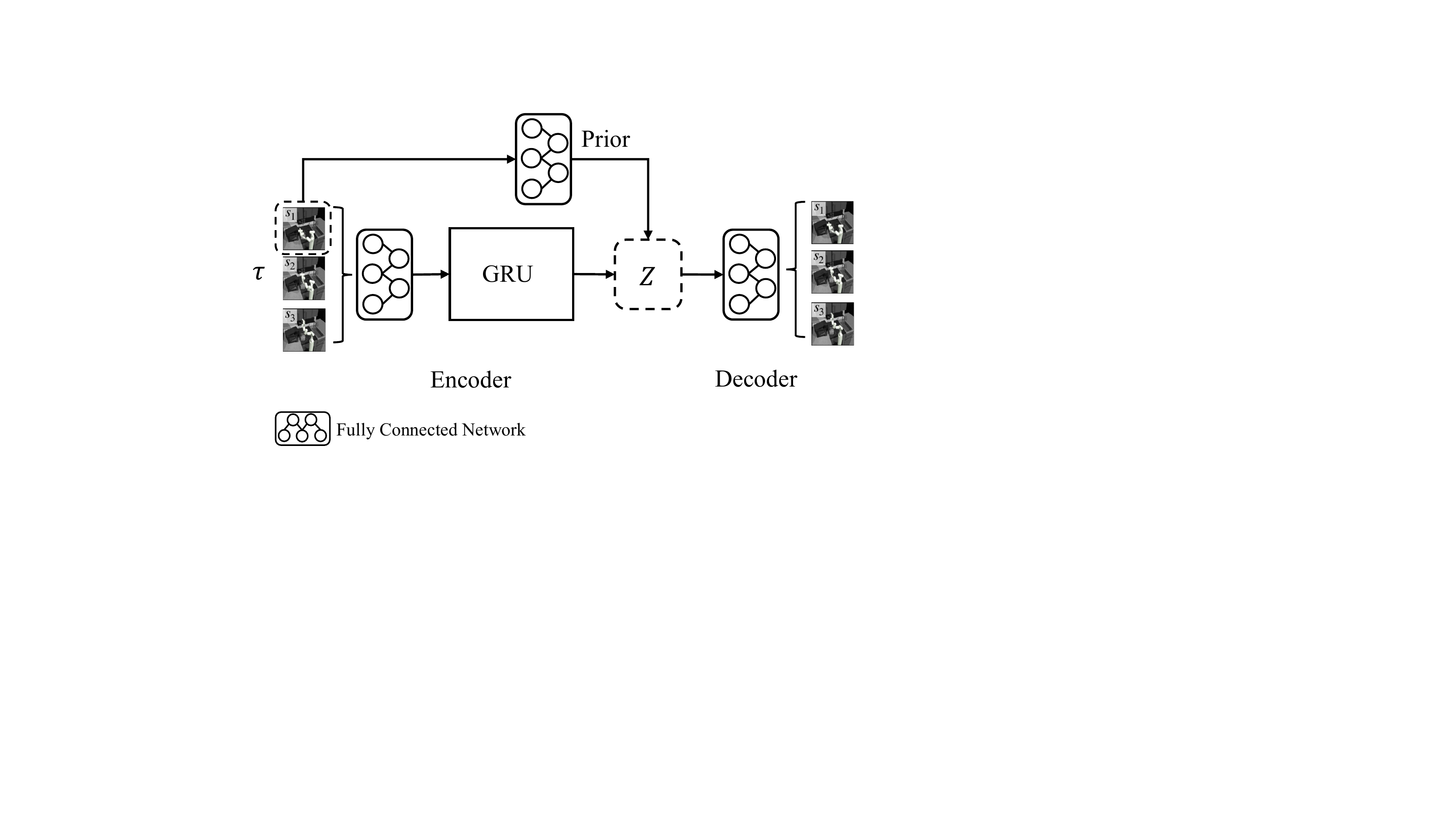}
    \caption{The framework of OPAL.}
    \label{fig:framework_opal}
\end{figure}
We adopt the same network structure parameter as the reported paper.
For, example, the hidden layer size $H=200$ in antmaze and adroit tasks, and $H=256$ in kitchen tasks.

As for the task policy IQL, we adopt the official implementation and the same architecture as LPD.
Then we finetune the expectile ratio~$\lambda$ and temperature~$\beta$.

In antmaze tasks, we use $c=10$.
In the kitchen and adroit tasks, we use $c=5$ since it works better. 

\subsection{LPD experiment details}
We adopt the conditional real NVP with several affine coupling layers as the LDP's structure, which is shown in Figure~\ref{fig:framework_ldp}.
\begin{figure}[h]
    \centering
    \includegraphics[scale=0.5]{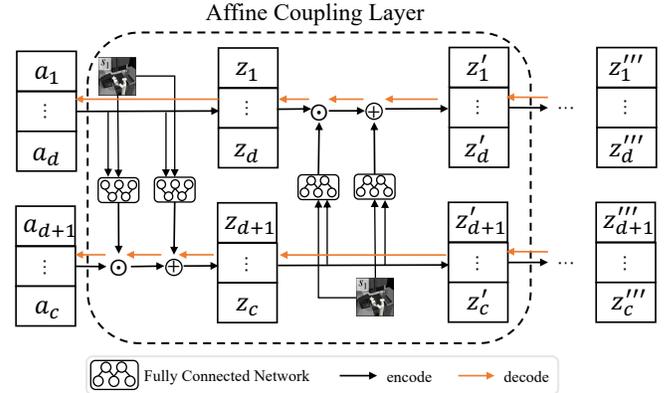}
    \caption{The framework of LPD.}
    \label{fig:framework_ldp}
\end{figure}
Specifically, given the input $a_{1:c}$, the affine coupling layer transforms it into $z$ through the operation: $z_{1:d}=a_{1:d}$ and $z_{d+1:c}=a_{d+1:c}\bigodot\exp(v(a_{1:d};s_1)) + t(a_{1:d};s_1)$, where $v$ and $t$ are implemented using the fully connected network.
Next, we flip the output and conduct the same operation again.
We repeat the affine coupling layer $k$ times in experiments with $H$ hidden dim in Fully connected networks.

In antmaze tasks, we select $k=1$ and $H=50$.
In kitchen tasks, we select $k=3$ and $H=100$.
In Adroit-human tasks, we select $k=1$ and $H=256$.
As for the skill length $c$, we select $c=4$ in kitchen and adroit tasks.
We select $c=2$ in antmaze tasks since we find it is sufficient for good performance.
The hyper-parameters for IQL+OPAL and IQL+LPD are the same as IQL other than finetuning the temperature~$\beta$. 

\end{document}